\theoremstyle{plain}
\newtheorem{thm}{Theorem}[section]
\newtheorem{proposition}[thm]{Proposition}
\theoremstyle{definition}
\newtheorem{definition}[thm]{Definition}
\newtheorem{remark}[thm]{Remark}
\newtheorem{example}[thm]{Example}
\newcommand{\nomath}[1]{\ifmmode
\mathrm{#1}%
\else
#1%
\fi}
\newcommand{\TODO}[1][]{\textcolor{red}{\ifx&#1&
\nomath{TODO}
\else
\nomath{TODO: #1}
\fi}}
\def\layersep{2.5cm}
\newcommand{\corners}{6pt}
\title{Deep Signature Transforms}
\newlength{\imanol}
\newcommand{\equallength}[1]{\makebox[\imanol][c]{#1}}
\author{ % alphabetical order
	\equallength{Patric Bonnier$^{1, }$\thanks{Equal contribution.}}
	\And
	\equallength{Patrick Kidger$^{1, 2, }$\footnotemark[1]}
	\And
	\equallength{Imanol Perez Arribas$^{1, 2, }$\footnotemark[1]}
	\And
	\equallength{Cristopher Salvi$^{1, 2, }$\footnotemark[1]}
	\And
	\equallength{Terry Lyons$^{1, 2}$}
	\AND \\[-12pt]
	\null$^1$ Mathematical Institute, University of Oxford \\
	\null$^2$ The Alan Turing Institute, British Library \\
	\texttt{\{bonnier, kidger, perez, salvi, tlyons\}@\hspace{0.1pt}maths.ox.ac.uk}
}
\begin{document}
	\maketitle
	\begin{abstract}
		The signature is an infinite graded sequence of statistics known to characterise a stream of data up to a negligible equivalence class. It is a transform which has previously been treated as a fixed feature transformation, on top of which a model may be built. We propose a novel approach which combines the advantages of the signature transform with modern deep learning frameworks. By learning an augmentation of the stream prior to the signature transform, the terms of the signature may be selected in a data-dependent way. More generally, we describe how the signature transform may be used as a layer anywhere within a neural network. In this context it may be interpreted as a pooling operation. We present the results of empirical experiments to back up the theoretical justification. Code available at \texttt{github.com/patrick-kidger/Deep-Signature-Transforms}.
	\end{abstract}
	\section{Introduction}
	\subsection{What is the signature transform?}
	When data is ordered sequentially then it comes with a natural path-like structure: the data may be thought of as a discretisation of a path $X \colon [0, 1] \to V$, where $V$ is some Banach space. In practice we shall always take $V = \mathbb R^d$ for some $d \in \mathbb N$. For example the changing air pressure at a particular location may be thought of as a path in $\mathbb R$; the motion of a pen on paper may be thought of as a path in $ \mathbb R^2$; the changes within financial markets may be thought of as a path in $\mathbb R^d$, with $d$ potentially very large.
	
	Given a path, we may define its \emph{signature}, which is a collection of statistics of the path. The map from a path to its signature is called the \emph{signature transform}.
	
	\begin{definition}\label{sig-definition}
		Let $\mathbf x = (x_1, \ldots, x_n)$, where $x_i \in \mathbb R^d$% for all $i = 1, 2, \ldots n $
		. Let $f =(f_1, \ldots, f_d) \colon [0, 1] \to \mathbb R^d$ be continuous, such that $f(\tfrac{i -1}{n - 1}) = x_i$, and linear on the intervals in between. Then the signature of $\mathbf x$ is defined as the collection of iterated integrals\footnote{For clarity here we have used more widely-understood notation. The definition of the signature transform is usually written in an equivalent but alternate manner using the notation of stochastic calculus; see Definition \ref{def:sigstochasticnotation} in Appendix \ref{appendix:sigprop}.}
		\begin{equation*}
		\mathrm{Sig}(\mathbf x) = \left(\left( \underset{0 < t_1 < \cdots < t_k < 1}{\int\cdots\int} \prod_{j = 1}^k \frac{\mathrm d f_{i_j}}{\mathrm dt}(t_j) \mathrm dt_1 \cdots \mathrm dt_k \right)_{\!\!1 \leq i_1, \ldots, i_k \leq d}\right)_{\!\!k\geq 0}.
		\end{equation*}
	\end{definition}	
	We shall often use the term \emph{signature} to refer to both a path's signature and the signature transform. Other texts sometimes use the term \emph{path signature} in a similar manner.
	
	We refer the reader to \cite{primer2016} for a primer on the use of the signature in machine learning. A brief overview of its key properties may be found in Appendix \ref{appendix:sigprop}, along with associated references.
	
	In short, the signature of a path determines the path essentially uniquely, and does so in an efficient, computable way. Furthermore, the signature is rich enough that every continuous function of the path may be approximated arbitrarily well by a linear function of its signature; it may be thought of as a `universal nonlinearity'. Taken together these properties make the signature an attractive tool for machine learning. The most simple way to use the signature is as feature transformation, as it may often be simpler to learn a function of the signature than of the original path.
	
	Originally introduced and studied by Chen in \cite{Chen54, Chen57, Chen58}, the signature has seen use in finance \cite{lyonsfinance2014, gyurko2014, lyons2019nonparametric, lyons2019model, kalsi2019optimal}, rough path theory \cite{lyons1998differential, FritzVictoir10} and machine learning \cite{yang2015chinese, xie2018learning, yang2016dropsample, yang2016deepwriterid, li2017lpsnet, yang2017leveraging, yang2016rotation, kiraly2016kernels, chevyrev2018signature}.
	
	\subsection{Comparison to the Fourier transform}
	The signature transform is most closely analogous to the Fourier transform.
	
	The fundamental difference between the signature transform and classical signal transforms such as Fourier transforms and wavelets is that the latter are used to model a curve as a linear combination in a functional basis. The signature does not try to model or parameterise the curve itself, but instead provides a basis for functions on the space of curves.
	
	%In multimodal data it can happen that the different channels represent linked information, and that the order of the events in the different channels is the feature of interest.
	For example, regularly seeing the sequence: phone call, trade, price movement in the stream of office data monitoring a trader might be an indication of insider trading. Such occurrences are straightforward to detect by via a linear regression composed with the signature transform. Modelling this signal using Fourier series or wavelets would be much more expensive: linearity of these transforms imply that each channel must be resolved accurately enough to see the order of events.
	
	From a signal processing perspective, the signature can be thought of as a filter which is invariant to resampling of the input signal. (See Proposition \ref{prop:invariance} in Appendix \ref{appendix:sigprop}).

\subsection{Use of the signature transform in machine learning}
	
	The signature is an infinite sequence, so in practice some finite collection of terms must be selected. Since the magnitude of the terms exhibit factorial decay, see Proposition \ref{prop:factorialdecay} in Appendix \ref{appendix:sigprop}, it is usual \cite{lyons2014rough} to simply choose the first $N$ terms of this sequence, which will typically be the largest terms
. These first $N$ terms are called the \emph{signature of depth $N$} or the \emph{truncated signature of depth $N$}, and the corresponding transform is denoted $\mathrm{Sig}^N$. But if the function to be learned depended nontrivially on the higher degree terms, then crucial information has nonetheless been lost.
	
	This may be remedied. Apply a pointwise augmentation to the original stream of data before taking the signature. Then the first $N$ terms of the signature may better encode the necessary information \cite{kiraly2016kernels, chevyrev2018signature}. Explicitly, let $\Phi \colon \mathbb R^d \to \mathbb R^e$ be fixed; one could ensure that information is not lost by taking $\Phi(x) = (x, \varphi(x))$ for some $\varphi$. Then rather than taking the signature of $\mathbf x = (x_1, \ldots, x_n)$, where $x_i \in \mathbb R^d$, instead take the signature of $\Phi(\mathbf x) = (\Phi(x_1), \ldots, \Phi(x_n))$. In this way one may capture higher order information from the stream in the lower degree terms of the signature.
	
	\subsection{Our work}
	But how should this augmentation $\Phi$ be chosen? Previous work has fixed it arbitrarily, or experimented with several options before choosing one \cite{kiraly2016kernels, chevyrev2018signature}. Observe that in each case the map $\mathbf x \mapsto \mathrm{Sig}^N(\Phi(\mathbf x))$ is still ultimately just a feature transformation on top of which a model is built. Our more general approach is to allow the selection of $\Phi$ to be data-dependent, by having it be learned; in particular it may be a neural network. Furthermore there is no reason it should necessarily operate pointwise, nor (since it is now learned) need it be of the form $ (x, \varphi(x)) $. In this way we may enjoy the benefits of using signatures while avoiding their main limitation.
	
	But this means that the signature transform is essentially operating as a layer within a neural network. It consumes a tensor of shape $(b, d, n)$ -- corresponding to a batch of size $b$ of paths in $\mathbb R^d$ that have been sampled $n$ times -- and returns a tensor of shape $(b, (d^{N + 1} - 1)/(d - 1))$, where $N$ is the number of terms used in the truncated signature.\footnote{As $(d^{N + 1} - 1)/(d - 1) = \sum_{k = 0}^N d^k$ is the number of scalar values in a signature with $N$ terms.} The signature is being used as a pooling operation.
	
	There is no reason to stop here. If the signature layer works well once then it is natural to seek to use it again. The obvious problem is that the signature transform consumes a stream of data and returns statistics which have no obvious stream-like qualities. The solution is to lift the input stream to a \emph{stream of streams}; for example, the stream of data $(x_1, \ldots, x_n)$ may be lifted to the `expanding windows' of $(\mathbf x_2, \ldots, \mathbf x_n)$, where $\mathbf x_i = (x_1, \ldots, x_i)$. Now apply the signature to each stream to obtain a stream of signatures $(\mathrm{Sig}^N(\mathbf x_2), \ldots, \mathrm{Sig}^N(\mathbf x_n))$, which is essentially a stream in Euclidean space.
	And now this new stream may be augmented via a neural network and the process repeated again, as many times as we wish.
	
	In this way the signature transform has been elevated from a one-time feature transformation to a first-class layer within a neural network. Thus we may reap the benefits of both the signature transform, with its strong corpus of mathematical theory, and the benefits of neural networks, with their great empirical success.% We refer to this usage as \emph{deep signatures}.  % Important to have on a new line because of what we say about the name `deep signatures' later on.
	
	Naturally all of this implies the need for an efficient implementation of the signature transform. Such concerns have motivated the creation of the spin-off Signatory project \cite{signatory}.
	
	The remainder of the paper is laid out as follows. In Section 2 we briefly discuss some related work, in Section 3 we detail the specifics of embedding the signature as a layer within a neural network. Sections 4 covers experiments; we demonstrate positive results for generative, supervised, and reinforcement learning problems. Section 5 is the conclusion. Appendix A provides an exposition of the theoretical properties of the signature, and Appendix B specifies implementation details.
	
	\section{Related Work}
	The signature transform is roughly analogous to the use of wavelets or Fourier transforms, and there are also related models based around these, for example \cite{Silvescu1999Fourier, Mingo2004Fourier, Gashler2016Fourier, Zhang1992wavelet}. We do not know of a detailed comparison between the use of these various transformations in the context of machine learning.
	
	Some related work using signatures has already been discussed in the previous section. We expand on their proposed models here.
	
	\begin{definition}
		Given a set $V$, the space of streams of data in $V$ is defined as
		\begin{equation*}
		\mathcal S(V) = \{ \mathbf x=(x_1, \ldots, x_n) : x_i \in V, n \in \mathbb N\}.
		\end{equation*}
		Given $\mathbf x=(x_1, \ldots, x_n) \in \mathcal S(V)$, the integer $n$ is called the length of $\mathbf x$.
	\end{definition}
	
		Two simple models utilising the signature layer are shown in Figure \ref{fig:simple-sig}.
	
	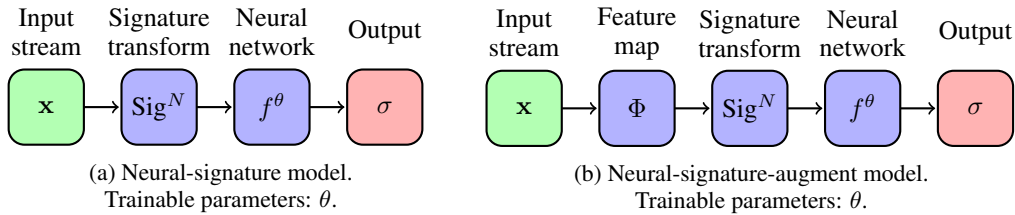
\begin{figure}[b]
		\captionsetup[subfigure]{justification=centering}
		\begin{subfigure}{0.45\textwidth}
			\centering
			\begin{tikzpicture}[shorten >=1pt,draw=black!50, node distance=\layersep]
			
			\draw[rounded corners=\corners, black, fill=green!30, thick]
			(0,0) rectangle ++(1,1);
			
%			\draw[black, fill=black] (0.25,0.2) circle (.5ex);
%			\draw[black, fill=black] (0.5,0.8) circle (.5ex);
%			\draw[black, fill=black] (0.75,0.6) circle (.5ex);
%			\draw[black, fill=black] (1,0.3) circle (.5ex);
%			\draw[black, fill=black] (1.25,0.7) circle (.5ex);
%			\draw[black, fill=black] (1.5,0.2) circle (.5ex);
%			\draw[black, fill=black] (1.75,0.5) circle (.5ex);
%			
%			\draw[black, thick] (0.25,0.2) -- (0.5,0.8);
%			\draw[black, thick] (0.5,0.8) -- (0.75,0.6);
%			\draw[black, thick] (0.75,0.6) -- (1,0.3);
%			\draw[black, thick] (1,0.3) -- (1.25,0.7);
%			\draw[black, thick] (1.25,0.7) -- (1.5,0.2);
%			\draw[black, thick] (1.5,0.2) -- (1.75,0.5);

			\node[text width=4em, text centered] at (0.5,0.5) {$\mathbf x$};
			
			\node[text width=4em, text centered] at (0.5,1.5) {Input stream};
			
			\draw[black, thick, ->] (1,0.5) -- (1.5,0.5);
			
			\draw[rounded corners=\corners, black, fill=blue!30, thick]
			(1.5,0) rectangle ++(1,1);
			
			\node[text width=4em, text centered] at (2,1.5) {Signature transform};
			\node[text width=4em, text centered] at (2,0.5) {  Sig$ ^N $  };

			\draw[black, thick, ->] (2.5,0.5) -- (3,0.5);
			
			\draw[rounded corners=\corners, black, fill=blue!30, thick]
			(3,0) rectangle ++(1,1);
			
			\node[text width=4em, text centered] at (3.5,1.5) {Neural network};
			\node[text width=6em, text centered] at (3.5,0.5) { $f^\theta$  };
			
			\draw[black, thick, ->] (4,0.5) -- (4.5,0.5);
			
			\draw[rounded corners=\corners, black, fill=red!30, thick]
			(4.5,0) rectangle ++(1,1);
			\node[text width=4em, text centered] at (5,1.5) {Output};
			\node[text width=6em, text centered] at (5,0.5) {$\sigma$};
			
			\end{tikzpicture}
			\caption{Neural-signature model. \\ Trainable parameters: $\theta$.}
			\label{fig:neural-sig}
		\end{subfigure}
		\begin{subfigure}{0.55\textwidth}
			\centering
			\begin{tikzpicture}[shorten >=1pt,draw=black!50, node distance=\layersep]
			
			\draw[rounded corners=\corners, black, fill=green!30, thick]
			(0,0) rectangle ++(1,1);
			
%			\draw[black, fill=black] (0.25,0.2) circle (.5ex);
%			\draw[black, fill=black] (0.5,0.8) circle (.5ex);
%			\draw[black, fill=black] (0.75,0.6) circle (.5ex);
%			\draw[black, fill=black] (1,0.3) circle (.5ex);
%			\draw[black, fill=black] (1.25,0.7) circle (.5ex);
%			\draw[black, fill=black] (1.5,0.2) circle (.5ex);
%			\draw[black, fill=black] (1.75,0.5) circle (.5ex);
%			
%			\draw[black, thick] (0.25,0.2) -- (0.5,0.8);
%			\draw[black, thick] (0.5,0.8) -- (0.75,0.6);
%			\draw[black, thick] (0.75,0.6) -- (1,0.3);
%			\draw[black, thick] (1,0.3) -- (1.25,0.7);
%			\draw[black, thick] (1.25,0.7) -- (1.5,0.2);
%			\draw[black, thick] (1.5,0.2) -- (1.75,0.5);
			
			\node[text width=4em, text centered] at (0.5,0.5) {$\mathbf x$};
			
			\node[text width=4em, text centered] at (0.5,1.5) {Input stream};

			\draw[black, thick, ->] (1,0.5) -- (1.5,0.5);
			
			\draw[rounded corners=\corners, black, fill=blue!30, thick]
			(1.5,0) rectangle ++(1,1);
			
			\node[text width=4em, text centered] at (2,1.5) {Feature map};
			\node[text width=4em, text centered] at (2,0.5) {  $\Phi$  };

			\draw[black, thick, ->] (2.5,0.5) -- (3,0.5);
			
			\draw[rounded corners=\corners, black, fill=blue!30, thick]
			(3,0) rectangle ++(1,1);
			
			\node[text width=4em, text centered] at (3.5,1.5) {Signature transform};
			\node[text width=4em, text centered] at (3.5,0.5) {  Sig$ ^N $  };

			\draw[black, thick, ->] (4,0.5) -- (4.5,0.5);
			
			\draw[rounded corners=\corners, black, fill=blue!30, thick]
			(4.5,0) rectangle ++(1,1);
			
			\node[text width=4em, text centered] at (5,1.5) {Neural network};
			\node[text width=6em, text centered] at (5,0.5) { $ f^{\theta}$  };
			
			\draw[black, thick, ->] (5.5,0.5) -- (6,0.5);
			\draw[rounded corners=\corners, black, fill=red!30, thick]
			(6,0) rectangle ++(1,1);
			
			\node[text width=4em, text centered] at (6.5,1.5) {Output};
			\node[text width=6em, text centered] at (6.5,0.5) {$\sigma$};
			
			\end{tikzpicture}
			\caption{Neural-signature-augment model. \\ Trainable parameters: $\theta$.}
			\label{fig:neural-sig-aug}
		\end{subfigure}
		
		\caption{Two simple architectures with a signature layer.}
		\label{fig:simple-sig}
	\end{figure}
	
	In principle the universal nonlinearity property of signatures (see Proposition \ref{prop:universal} in Appendix \ref{appendix:sigprop}) guarantees that the model shown in Figure \ref{fig:neural-sig}, is rich enough to learn any continuous function. (With the neural network taken to be a single linear layer and the input stream assumed to already be time-augmented.) In practice, of course, the signature must be truncated. Furthermore, it is not clear how to appropriately choose the truncation hyperparameter $N$. Thus a more practical approach is to remove the restriction that the neural network must be linear, and learn a nonlinear function instead. This approach has been applied successfully in various tasks \cite{lyonsfinance2014, yang2015chinese, xie2018learning, yang2016dropsample, yang2016deepwriterid, li2017lpsnet, yang2017leveraging, yang2016rotation}.
	
	An alternate model is shown in Figure \ref{fig:neural-sig-aug}. Following \cite{kiraly2016kernels, chevyrev2018signature}, a pointwise transformation could be applied to the stream before taking the signature transform. That is, applying the feature map $\Phi \colon \mathbb R^d \to \mathbb R^e$ to the $d$-dimensional stream of data $(x_1, \ldots, x_n) \in \mathcal S(\mathbb R^d)$ yields $(\Phi(x_1), \ldots, \Phi(x_n)) \in \mathcal S(\mathbb R^e)$; the signature of $\Phi(\mathbf x)$ may then potentially capture properties of the stream of data that will yield more effective models.

	\section{The signature transform as a layer in a neural network}
	However, there is not always a clear candidate for the feature map $\Phi$ and a good choice is likely to be data-dependent. Thus we propose to make $\Phi$ learnable by taking $\Phi = \Phi^\theta$ to be a neural network with trainable parameters $\theta$. In this case, we again obtain the neural network shown in Figure \ref{fig:neural-sig-aug}, except that $\Phi$ is now also learnable.
	
	The signature has now become a layer within a neural network. It consumes a tensor of shape $(b, d, n)$ -- corresponding to a batch of size $b$ of paths in $\mathbb R^d$ that have been sampled $n$ times -- and returns a tensor of shape $(b, (d^{N + 1} - 1)/(d - 1))$, where $N$ is the number of terms used in the truncated signature.
	
Despite being formed of integrals, the signature is in fact straightforward and efficient to compute exactly, see Section \ref{subsection:computation} in Appendix \ref{appendix:sigprop}. More than that, the computation may in fact be described in terms of standard tensor operations. As such it may be backpropagated through without difficulty. % Mention Jeremy's thesis and a nicer way of calculating the backprop?
	
%	In particular this means that having a learnable map before the signature does not pose a problem.
	
	\subsection{Stream-preserving neural networks}
	
	Let $\mathbf x = (x_1, \ldots, x_n) \in \mathcal S(\mathbb R^d)$. Whatever the choice of $\Phi^\theta$, it must preserve the stream-like nature of the data if we are to take a signature afterwards. The simplest way of doing this is to have $\Phi^\theta$ map $\mathbb R^d \to \mathbb R^e$, so that it operates pointwise. This defines $\Phi(\mathbf x)$ by
	\begin{equation}\label{eq:pointwise}
	\Phi(\mathbf x) = (\Phi^\theta(x_1), \ldots, \Phi^\theta(x_n)) \in \mathcal S(\mathbb R^e).
	\end{equation}
	Another way to preserve the stream-like nature is to sweep a one dimensional convolution along the stream; more generally one could sweep a whole feedforward network along the stream. For some $ m \in \mathbb N $ and $\Phi^\theta \colon \mathbb{R}^{d \times m} \to \mathbb R^e$ this defines $\Phi(\mathbf x)$ by
	\begin{equation}\label{eq:convolution}
	\Phi(\mathbf x) = (\Phi^\theta(x_1, \ldots, x_m), \ldots, \Phi^\theta(x_{n - m + 1}, \ldots, x_n)) \in \mathcal S(\mathbb R^e).
	\end{equation}
	More generally still the network could be recurrent, by having memory. Let $\Phi_{0} = 0$, fix $m \in \mathbb N$, and define $\Phi_k = \Phi^\theta\left(x_k, \ldots, x_{k + m}; \Phi_{k - 1}\right)$ for $k=1, \ldots, n - m + 1$. Then define $\Phi(\mathbf x)$ by
	\begin{equation}\label{eq:recurrent}
	\Phi(\mathbf x) = (\Phi_1, \ldots, \Phi_{n -m +1}) \in \mathcal S(\mathbb R^e).
	\end{equation}
	
	\subsection{Stream-like data}
	
	It is worth taking a moment to think what is really meant by `stream-like nature'. The signature transform is defined on paths; it is applied to a stream of data in $\mathcal S(\mathbb R^d)$ by first interpolating the data into a path and then taking the signature.
	
	The data is treated as a discretisation or set of observations of some underlying path. Note that there is nothing wrong with the path itself having a discrete structure to it; for example a sentence.
	
	In principle one could reshape a tensor of shape $(b, nd)$ with no stream-like nature into one of shape $(b, d, n)$, and then take the signature. However it is not clear what this means mathematically. There is no underlying path. The signature is at this point an essentially arbitrary transformation, without the mathematical guarantees normally associated with it.
	
	\subsection{Stream-preserving signatures, using lifts}
	
	We would like to apply the signature layer multiple times. However applying the signature transform consumes the stream-like nature of the data, which prevents this. The solution is to construct a stream of signatures in the following way: given a stream $\mathbf x = (x_1, \ldots, x_n) \in \mathcal S(\mathbb R^d)$, let $\mathbf x_k = (x_1, \ldots, x_k)$ for $k=2, \ldots, n$, and apply the signature to each $\mathbf x_k$ to obtain the stream
	\begin{equation}\label{eq:firstlift}
	(\mathrm{Sig}^N(\mathbf x_2), \ldots, \mathrm{Sig}^N(\mathbf x_n)) \in \mathcal S(\mathbb R^{(d^{N + 1} - 1)/(d - 1)}).
	\end{equation}
	The shortest stream it is meaningful to take the signature of is of length two, which is why there is no corresponding $\mathrm{Sig}^N(\mathbf x_1)$ term.
	
	In this way the stream-like nature of the data is preserved through the signature transform.
	
	This notion may be generalised: let
	\begin{equation*}
	\ell = (\ell^1, \ell^2, \ldots, \ell^v) \colon \mathcal S(\mathbb R^d) \to \mathcal S(\mathcal S(\mathbb R^e)),
	\end{equation*}
	which we refer to as a \emph{lift} into the space of streams of streams (and $v$ will likely depend on the length of the input to $\ell$). Then apply the signature stream-wise to define $\mathrm{Sig}^N(\ell(\mathbf x))$ by
	\begin{equation}\label{eq:sigdef}
	\mathrm{Sig}^N(\ell(\mathbf x)) = \left(\mathrm{Sig}^N(\ell^1(\mathbf x)), \ldots, \mathrm{Sig}^N(\ell^v(\mathbf x))\right) \in \mathcal S(\mathbb R^{(e^{N + 1} - 1)/(e - 1)}).
	\end{equation}
	
	In the example of equation \eqref{eq:firstlift}, $\ell$ is given by
	\begin{equation}\label{eq:expanding}
	\ell(\mathbf x) = (\mathbf x_2, \ldots, \mathbf x_n).
	\end{equation}
	
	Other plausible choices for $\ell$ are to cut up $\mathbf x$ into multiple pieces, for example
	\begin{equation}\label{eq:block}
	\ell(\mathbf x) = ((x_1, x_2), (x_3, x_4), \ldots, (x_{2\lfloor n/2 \rfloor - 1}, x_{2\lfloor n/2 \rfloor})),
	\end{equation}
	or to take a sliding window
	\begin{equation}\label{eq:sliding}
	\ell(\mathbf x) = ((x_1, x_2, x_3), (x_2, x_3, x_4), \ldots, (x_{n - 2}, x_{n - 1}, x_n)).
	\end{equation}
	
\subsection{Multiple signature layers}
	
	By inserting lifts, the signature transform may be composed as many times as desired. That is, suppose we wish to learn a map from $\mathcal S(\mathbb R^d)$ to $\mathcal X$, where $\mathcal X$ is some set. (Which may be finite for a classification problem or infinite for a regression problem.) Let $c_i, d_i, e_i, N_i \in \mathbb N$ be such that $d_1 = d$ and $d_{i + 1} = (c_{i}^{N_i + 1} - 1) / (c_{i} - 1)$, for $i = 1, \ldots, k$.
	
	Let
	\begin{align*}
	\Phi_i^{\theta_i} \colon \mathbb{R}^{d_i \times m_i} \to \mathbb R^{e_i},\qquad
	\ell_i \colon \mathcal S(\mathbb R^{e_i}) \to \mathcal S(\mathcal S(\mathbb R^{c_{i}})),\qquad
	f^{\theta_{k+1}} \colon \mathcal S(\mathbb R^{(c_{k}^{N_k + 1} - 1) / (c_{k} - 1)}) \to \mathcal X,
	\end{align*}
	where $\Phi_i^{\theta_i}$ and $\ell_i$ are defined in the manner of equations \eqref{eq:pointwise}--\eqref{eq:recurrent} and \eqref{eq:expanding}--\eqref{eq:sliding}, and $\theta_1, \ldots, \theta_{k+1}$ are some trainable parameters. Then defining compositions in the manner of equations \eqref{eq:pointwise}--\eqref{eq:sigdef}, let
	\begin{equation*}
	\sigma = \left(f^{\theta_{k + 1}} \circ \mathrm{Sig}^{N_k} \circ \ell_k \circ \Phi_k^{\theta_k} \circ \cdots \circ \Phi_2^{\theta_2} \circ \mathrm{Sig}^{N_1} \circ \ell_1 \circ \Phi_1^{\theta_1}\right)(\mathbf x).
	\end{equation*}
	
	\begin{figure}[b]
		\begin{tikzpicture}[shorten >=1pt,draw=black!50, node distance=\layersep]
		
		\draw[rounded corners=\corners, black, fill=green!30, thick]
		(1,0) rectangle ++(1,1);
%		\draw[black, fill=black] (0.25,0.2) circle (.5ex);
%		\draw[black, fill=black] (0.5,0.8) circle (.5ex);
%		\draw[black, fill=black] (0.75,0.6) circle (.5ex);
%		\draw[black, fill=black] (1,0.3) circle (.5ex);
%		\draw[black, fill=black] (1.25,0.7) circle (.5ex);
%		\draw[black, fill=black] (1.5,0.2) circle (.5ex);
%		\draw[black, fill=black] (1.75,0.5) circle (.5ex);
%		\draw[black, thick] (0.25,0.2) -- (0.5,0.8);
%		\draw[black, thick] (0.5,0.8) -- (0.75,0.6);
%		\draw[black, thick] (0.75,0.6) -- (1,0.3);
%		\draw[black, thick] (1,0.3) -- (1.25,0.7);
%		\draw[black, thick] (1.25,0.7) -- (1.5,0.2);
%		\draw[black, thick] (1.5,0.2) -- (1.75,0.5);
		\node[text width=4em, text centered] at (1.5,0.5) {$\mathbf x$};
		\node[text width=4em, text centered] at (1.5,1.5) {Input stream};
		
		\draw[black, thick, ->] (2,0.5) -- (2.5,0.5);
		
		\draw[rounded corners=\corners, black, fill=blue!30, thick]
		(2.5,0) rectangle ++(1,1);		
		\node[text width=4em, text centered] at (3,1.5) {Neural network};
		\node[text width=4em, text centered] at (3,0.5) {  $ \Phi_1^{\theta_1} $  };
		
		\draw[black, thick, ->] (3.5,0.5) -- (4,0.5);
		
		\draw[rounded corners=\corners, black, fill=blue!30, thick]
		(4,0) rectangle ++(1,1);
		\node[text width=4em, text centered] at (4.5,1.5) {Lift};
		\node[text width=4em, text centered] at (4.5,0.5) {  $\ell_1$  };
		
		\draw[black, thick, ->] (5,0.5) -- (5.5,0.5);
		
		\draw[rounded corners=\corners, black, fill=blue!30, thick]
		(5.5,0) rectangle ++(1,1);
		\node[text width=4em, text centered] at (6,1.5) {Signature transform};
		\node[text width=4em, text centered] at (6,0.5) {  Sig$ ^{N_1} $  };
		
		\draw[black, thick, ->] (6.5,0.5) -- (7,0.5);
		
		\draw[rounded corners=\corners, black, fill=blue!30, thick]
		(7,0) rectangle ++(1,1);
		\node[text width=4em, text centered] at (7.5,1.5) {Neural network};
		\node[text width=4em, text centered] at (7.5,0.5) {$\Phi_2^{\theta_2}$};
		
		\draw[black, thick, ->] (8,0.5) -- (8.25,0.5);
		
		\node[text width=4em, text centered] at (8.5,0.5) {$\ldots$};	
		
		\draw[black, thick, ->] (8.75,0.5) -- (9,0.5);
		
		\draw[rounded corners=\corners, black, fill=blue!30, thick]
		(9,0) rectangle ++(1,1);
		\node[text width=4em, text centered] at (9.5,1.5) {Lift};
		\node[text width=4em, text centered] at (9.5,0.5) {$\ell_k$};
		
		\draw[black, thick, ->] (10,0.5) -- (10.5,0.5);
		
		\draw[rounded corners=\corners, black, fill=blue!30, thick]
		(10.5,0) rectangle ++(1,1);
		\node[text width=4em, text centered] at (11,1.5) {Signature transform};
		\node[text width=4em, text centered] at (11,0.5) {$\mathrm{Sig}^{N_k}$};
		
		\draw[black, thick, ->] (11.5,0.5) -- (12,0.5);
		
		\draw[rounded corners=\corners, black, fill=blue!30, thick]
		(12,0) rectangle ++(1,1);
		\node[text width=4em, text centered] at (12.5,1.5) {Neural network};
		\node[text width=4em, text centered] at (12.5,0.5) {$f^{\theta_{k+1}}$};
		
		\draw[black, thick, ->] (13,0.5) -- (13.5,0.5);
		
		\draw[rounded corners=\corners, black, fill=red!30, thick]
		(13.5,0) rectangle ++(1,1);
		\node[text width=4em, text centered] at (14,1.5) {Output};
		\node[text width=4em, text centered] at (14,0.5) {$\sigma$};		
		\end{tikzpicture}
		\centering
		\caption{Deep signature model. Trainable parameters: $\theta_1, \ldots, \theta_{k + 1}$.}
		\label{fig:deep signatures}
	\end{figure}
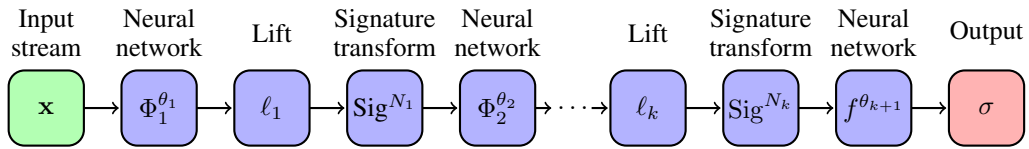
	
	This defines the \emph{deep signature model}, summarised in Figure \ref{fig:deep signatures}.% The name comes from the fact that the signature is now embedded deep within the network instead of operating as a feature transformation. When there are multiple signature layers in a network, so that signatures-of-signatures are computed, then the name may be taken to have a double meaning, but we stress that this need not be the case. Good results may often be obtained with only a single signature layer. % So I think this bit is quite important... it makes the name `deep signatures' suitable even for single-sig models.
	
	An important special case is when $V = \mathcal S(\mathbb R^e)$, so that the final network $f^{\theta_{k+1}}$ is stream-preserving. Then the overall model $\mathbf x \mapsto \sigma$ is also stream-preserving. See for example Section \ref{section:generative}.
	
	Note that in principle it is acceptable to take the trivial lift to a sequence of a single element,
	\begin{equation}\label{eq:nolift}
	\ell(\mathbf x) = (\mathbf x).
	\end{equation}
	Taking the signature of this will then essentially remove the stream-like nature, however, so it is suitable only for the final lift of a deep signature model. We observe in particular that this is what is done in the models described in Figure \ref{fig:simple-sig}, which we identify as special cases of the deep signature model, lacking also any learned transformation before the signature.
	
	It is easy to see that the deep signature model exhibits the universal approximation property. This fact follows from the universal approximation theorem for neural networks \cite{pinkus} and from the universal nonlinearity property of signatures (see Proposition \ref{prop:universal} in Appendix \ref{appendix:sigprop}).
	
%	We discuss a ResNet \cite{resnet} style adaptation of the deep signature model in Section \ref{section:rl}.
	
	\subsection{Implementation}
When using the signature transform as a feature transformation, then it suffices to just pre-process and save the entire dataset before training. However when the signature transform is placed within a neural network then the signature transform must be evaluated and backpropagated through for each step of training; this is much more computationally intensive. This has motivated the creation of the separate spin-off Signatory project \cite{signatory}, to efficiently perform and backpropagate through the signature transform.
	
\subsection{Inverting the truncated signature}

How well does a truncated signature encode the original stream of data? A simple experiment is to attempt to recover the original stream of data given its truncated signature. We remark that finding a mathematical description of this inversion is a challenging task \cite{lyons2018inverting, chang2017signature, chang2018effective}.

Fix a stream of data $\mathbf x = (x_1, \ldots, x_n) \in \mathcal S(\mathbb R^d)$. Assume that the truncated signature $\mathrm{Sig}^N(\mathbf x)$ and the number of steps $n\in \mathbb N$ are known. Now apply gradient descent to minimise
\begin{equation*}
L(\mathbf y; \mathbf x) = \left \lVert \mathrm{Sig}^N(\mathbf y) - \mathrm{Sig}^N(\mathbf x)\right \rVert_2^2\quad \mbox{for }\mathbf y=(y_1, \ldots, y_n)\in \mathcal S(\mathbb R^d).
\end{equation*}

Figure \ref{fig:invertedsig} shows four handwritten digits from the PenDigits dataset \cite{UCI}. The solid blue path is the original path $\mathbf x$, whilst the dashed orange path is the reconstructed path $\mathbf y$ minimising $L(\mathbf y; \mathbf x)$. Truncated signatures of order $N=12$ were used for this task. We see that the truncated signatures have managed to encode the input paths $\mathbf x$ almost perfectly.

\begin{figure}[h]
\centering
\begin{minipage}{.2\textwidth}
  \centering
  \includegraphics[width=0.9\linewidth]{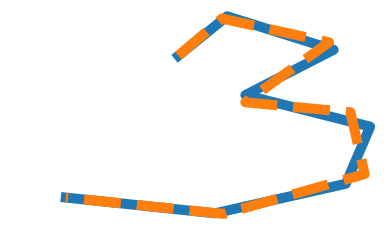}
\end{minipage}
\begin{minipage}{.2\textwidth}
  \centering
  \includegraphics[width=0.9\linewidth]{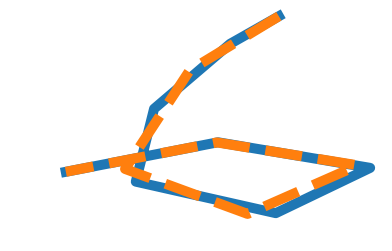}
\end{minipage}
\begin{minipage}{.2\textwidth}
  \centering
  \includegraphics[width=0.9\linewidth]{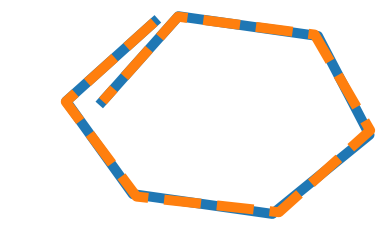}
\end{minipage}
\begin{minipage}{.2\textwidth}
  \centering
  \includegraphics[width=0.9\linewidth]{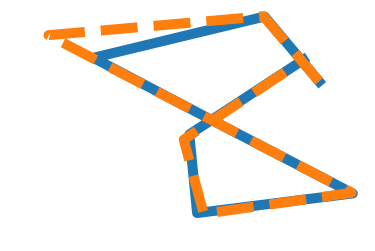}
\end{minipage}%
\caption{Original path (blue) and path reconstructed from its signature (dashed orange) for four handwritten digits in the PenDigits dataset \cite{UCI}.}
\label{fig:invertedsig}
\end{figure}

\section{Numerical experiments}
\subsection{A generative model for a stochastic process}\label{section:generative}

Generative models are typically trained to learn to transform random noise to a target distribution. One common approach are Generative Adversarial Networks \cite{goodfellow2014generative}. An alternative approach is to define a distance on the space of distributions by embedding them into a Reproducing Kernel Hilbert Space. The discriminator is then a fixed two-sample test based on a kernel maximum mean discrepancy. This is known as a Generative Moment Matching Network \cite{gmmn2015, traininggmmns2015, gretton2017}.

With this framework we propose a deep signature model to generate sequential data. The discriminator is as in \cite{kiraly2016kernels, chevyrev2018signature}. The natural choice for random noise is Brownian motion $B_t$.

Define the kernel $k \colon \mathcal S(\mathbb R^d) \times \mathcal S(\mathbb R^d) \to \mathbb R$ by
\begin{align*}
k(\mathbf{x},\mathbf{y}) &= \left (\mathrm{Sig}^N \left (\lambda_{\mathbf{x}} \mathbf x\right ), \mathrm{Sig}^N\left (\lambda_{\mathbf{y}}\mathbf y\right ) \right ),
\end{align*}
where $\lambda_{\mathbf x}\in \mathbb R$ is a certain normalising constant which guarantees that $k$ is the kernel of a Reproducing Kernel Hilbert Space, and $(\,\cdot\, , \,\cdot\,)$ denotes the dot product.

Given $n$ samples $\{\mathbf x^{(i)}\}_{i=1}^n \subseteq \mathcal S(\mathbb R^d)$ from the generator and $m$ samples $\{\mathbf y^{(i)}\}_{i=1}^m \subseteq \mathcal S(\mathbb R^d)$ from the target distribution, define the loss $T$ by
\begin{equation*}
T\left (\{\mathbf x^{(i)}\}_{i=1}^n, \{\mathbf y^{(i)}\}_{i=1}^m\right ) = \dfrac{1}{n^2}\sum_{i, j} k(\mathbf{x}^{(i)},\mathbf{x}^{(j)}) - \dfrac{2}{nm}\sum_{i, j} k(\mathbf{x}^{(i)},\mathbf{y}^{(j)}) + \dfrac{1}{m^2}\sum_{i, j} k(\mathbf{y}^{(i)},\mathbf{y}^{(j)}).
\end{equation*}

\newcommand{\offset}{-0.5}  % 1 + \offset == arrow length
\newcommand{\otheroffset}{-0.2}  % arrow spacing
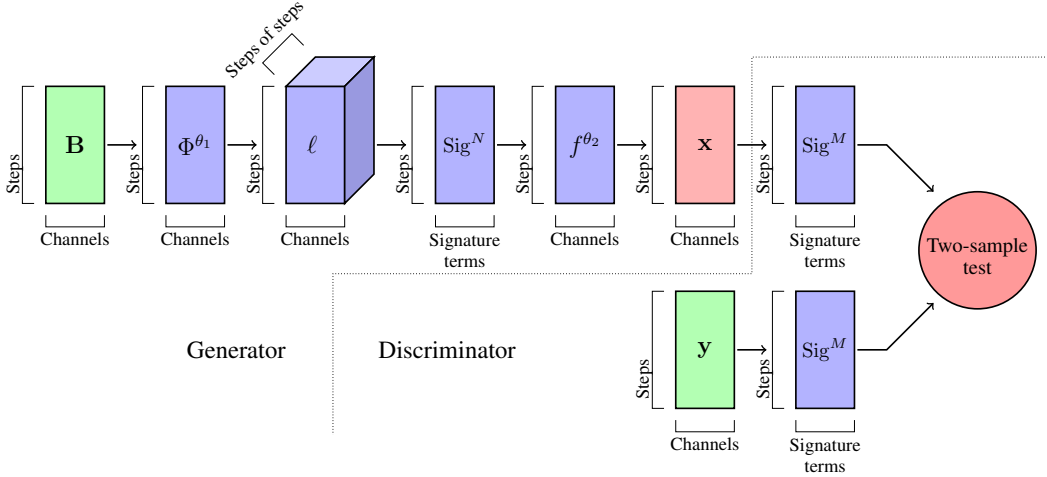
\begin{figure}[t]
	\centering
	\resizebox{\textwidth}{!}{%
		\begin{tikzpicture}
		%Brownian motion
		\node (Position) at (0,0) {};
			%box
			\draw[black, fill=green!30, thick] (Position) rectangle ++(1,2);	%Body
			%Annotations brackets
			\draw ($ (Position) + (-.2,0) $) -- ++(-0.2,0) -- ++(0,2) -- ++(0.2,0); %left
			\draw ($ (Position) + (0,-0.2) $) -- ++(0,-.2) -- ++(1,0) -- ++(0,.2); %bottom
			%Annotations
			\node[align=center,font=\large](centre) at ($ (Position) + (.5,1) $){$\mathbf{B} $}; %centre
			\node[align=center,font=\small, rotate=90](left) at ($ (Position) + (-.53,0.5) $){Steps}; %left
			\node[align=center,font=\small](bottom) at ($ (Position) + (.5,-0.6) $){Channels}; %bottom
			%Arrow
			\draw[black, thick, ->] ($ (Position) + (1.25 + \otheroffset,1) $) -- ++(1 + \offset,0);
		
		%Augmentation
		\node (Position) at (2.75 + \offset + \otheroffset,0) {};
			%box
			\draw[black, fill=blue!30, thick] (Position) rectangle ++(1,2);	%Body
			%Annotations brackets
			\draw ($ (Position) + (-.2,0) $) -- ++(-0.2,0) -- ++(0,2) -- ++(0.2,0); %left
			\draw ($ (Position) + (0,-0.2) $) -- ++(0,-.2) -- ++(1,0) -- ++(0,.2); %bottom
			%Annotations
			\node[align=center,font=\large](centre) at ($ (Position) + (.5,1) $){$ \Phi^{\theta_1} $}; %centre
			\node[align=center,font=\small, rotate=90](left) at ($ (Position) + (-.53,0.5) $){Steps}; %left
			\node[align=center,font=\small](bottom) at ($ (Position) + (.5,-0.6) $){Channels}; %bottom
			%Arrow
			\draw[black, thick, ->] ($ (Position) + (1.25 + \otheroffset,1) $) -- ++(1 + \offset,0);
		
		%Lift
		\node (Position) at (5.5 + 2 * \offset + 2 * \otheroffset,0) {};
			%box
			\draw[black, fill=blue!30, thick] (Position) rectangle ++(1,2);	%Body
			\draw[fill=blue!20, thick]  ($ (Position) + (0,2) $) -- ++(.5,.5) -- ++(1,0) -- ++(-.5,-.5) -- cycle; %top
			\draw[fill=blue!70!black!40, thick]  ($ (Position) + (1,0) $) -- ++(.5,.5) -- ++(0,2) -- ++(-.5,-.5) -- cycle; %side	
			%Annotations brackets
			\draw ($ (Position) + (-.2,0) $) -- ++(-0.2,0) -- ++(0,2) -- ++(0.2,0); %left
			\draw ($ (Position) + (0,-0.2) $) -- ++(0,-.2) -- ++(1,0) -- ++(0,.2); %bottom
			\draw ($ (Position) + (-.2,2.2) $) -- ++(-.2,.2) -- ++(.5,.5) -- ++(.2,-.2); %top
			%Annotations
			\node[align=center, font=\large](centre) at ($ (Position) + (.5,1) $){ $\ell$ }; %centre
			\node[align=center,font=\small, rotate=90](left) at ($ (Position) + (-.53,0.5) $){Steps}; %left
			\node[align=center,font=\small](bottom) at ($ (Position) + (.5,-0.6) $){Channels}; %bottom
			\node[align=center,font=\small, rotate=45](top) at ($ (Position) + (-.35,2.8) $){Steps of steps}; %top
			%Arrow
			\draw[black, thick, ->] ($ (Position) + (1.75 + \otheroffset,1) $) -- ++(1 + \offset,0);
		
		%Sig
		\node (Position) at (8.75 + 3 * \offset + 3 * \otheroffset ,0) {};
			%box
			\draw[black, fill=blue!30, thick] (Position) rectangle ++(1,2);	%Body
			%Annotations brackets
			\draw ($ (Position) + (-.2,0) $) -- ++(-0.2,0) -- ++(0,2) -- ++(0.2,0); %left
			\draw ($ (Position) + (0,-0.2) $) -- ++(0,-.2) -- ++(1,0) -- ++(0,.2); %bottom
			%Annotations
			\node[align=center](centre) at ($ (Position) + (.5,1) $){$ \mathrm{Sig}^N $}; %centre
			\node[align=center,font=\small, rotate=90](left) at ($ (Position) + (-.53,0.5) $){Steps}; %left
			\node[align=center,font=\small](bottom) at ($ (Position) + (.5,-0.8) $){Signature \\ terms}; %bottom
			%Arrow
			\draw[black, thick, ->] ($ (Position) + (1.25 + \otheroffset,1) $) -- ++(1 + \offset,0);
		
		%Linear
		\node (Position) at (11.5 + 4 * \offset + 4 * \otheroffset,0) {};
			%box
			\draw[black, fill=blue!30, thick] (Position) rectangle ++(1,2);	%Body
			%Annotations brackets
			\draw ($ (Position) + (-.2,0) $) -- ++(-0.2,0) -- ++(0,2) -- ++(0.2,0); %left
			\draw ($ (Position) + (0,-0.2) $) -- ++(0,-.2) -- ++(1,0) -- ++(0,.2); %bottom
			%Annotations
			\node[align=center,font=\large](centre) at ($ (Position) + (.5,1) $){$ f^{\theta_2} $}; %centre
			\node[align=center,font=\small, rotate=90](left) at ($ (Position) + (-.53,0.5) $){Steps}; %left
			\node[align=center,font=\small](bottom) at ($ (Position) + (.5,-0.6) $){Channels}; %bottom
			%Arrow slanted
			\draw[black, thick, ->] ($ (Position) + (1.25 + \otheroffset,1) $) -- ++(1 + \offset,0);
			
		%Linear
		\node (Position) at (14.25 + 5 * \offset + 5 * \otheroffset,0) {};
			%box
			\draw[black, fill=red!30, thick] (Position) rectangle ++(1,2);	%Body
			%Annotations brackets
			\draw ($ (Position) + (-.2,0) $) -- ++(-0.2,0) -- ++(0,2) -- ++(0.2,0); %left
			\draw ($ (Position) + (0,-0.2) $) -- ++(0,-.2) -- ++(1,0) -- ++(0,.2); %bottom
			%Annotations
			\node[align=center,font=\large](centre) at ($ (Position) + (.5,1) $){$ \mathbf x $}; %centre
			\node[align=center,font=\small, rotate=90](left) at ($ (Position) + (-.53,0.5) $){Steps}; %left
			\node[align=center,font=\small](bottom) at ($ (Position) + (.5,-0.6) $){Channels}; %bottom
			%Arrow slanted
			\draw[black, thick, ->] ($ (Position) + (1.25 + \otheroffset,1) $) -- ++(1 + \offset,0);

		% Sig 2, sample and text		
		\node (Position) at (17 + 6 * \offset + 6 * \otheroffset, 0) {};
			%box
			\draw[black, fill=blue!30, thick] (Position) rectangle ++(1,2);	%Body
			%Annotations brackets
			\draw ($ (Position) + (-.2,0) $) -- ++(-0.2,0) -- ++(0,2) -- ++(0.2,0); %left
			\draw ($ (Position) + (0,-0.2) $) -- ++(0,-.2) -- ++(1,0) -- ++(0,.2); %bottom
			%Annotations
			\node[align=center](centre) at ($ (Position) + (.5,1) $){$ \mathrm{Sig}^M $}; %centre
			\node[align=center,font=\small, rotate=90](left) at ($ (Position) + (-.53,0.5) $){Steps}; %left
			\node[align=center,font=\small](bottom) at ($ (Position) + (.5,-0.8) $){Signature \\ terms}; %bottom
			%Arrow
			\draw[black, thick, ->] ($ (Position) + (1.25 + \otheroffset,1) $) -- ++(1 + \offset,0) -- ++(.8,-.8);

			%Separating line
			\draw[densely dotted] ($ (Position) + (4.25,2.5) $) -- ++(-5,0) -- ++(0,-3.7) -- ++(-7.15,0) -- ++(0,-2.75);
			%Gan text
			\node[align=center, font=\large](left) at ($ (Position) + (-7.75 - 1.8,-2.5) $){Generator}; 
			%Desc text
			\node[align=center, font=\large](left) at ($ (Position) + (-7.75 + 1.8,-2.5) $){Discriminator};
			%Test
			\draw[fill=red!40, thick] ($ (Position) + (3.6 + \offset,-0.8) $) circle (1);
			\node[align=center](centre) at ($ (Position) + (3.6 + \offset,-0.9) $){Two-sample \\ test}; 			
		
		%Sample path
		\node (Position) at (14.25 + 5 * \offset + 5 * \otheroffset,-3.5) {};
			%box
			\draw[black, fill=green!30, thick] (Position) rectangle ++(1,2);	%Body
			%Annotations brackets
			\draw ($ (Position) + (-.2,0) $) -- ++(-0.2,0) -- ++(0,2) -- ++(0.2,0); %left
			\draw ($ (Position) + (0,-0.2) $) -- ++(0,-.2) -- ++(1,0) -- ++(0,.2); %bottom
			%Annotations
			\node[align=center,font=\large](centre) at ($ (Position) + (.5,1) $){$ \mathbf {y} $}; %centre
			\node[align=center,font=\small, rotate=90](left) at ($ (Position) + (-.53,0.5) $){Steps}; %left
			\node[align=center,font=\small](bottom) at ($ (Position) + (.5,-0.6) $){Channels}; %bottom
			%Arrow
			\draw[black, thick, ->] ($ (Position) + (1.25 + \otheroffset,1) $) -- ++(1 + \offset,0);
		
		%Sig 3
		\node (Position) at (17 + 6 * \offset + 6 * \otheroffset,-3.5) {};
			%box
			\draw[black, fill=blue!30, thick] (Position) rectangle ++(1,2);	%Body
			%Annotations brackets
			\draw ($ (Position) + (-.2,0) $) -- ++(-0.2,0) -- ++(0,2) -- ++(0.2,0); %left
			\draw ($ (Position) + (0,-0.2) $) -- ++(0,-.2) -- ++(1,0) -- ++(0,.2); %bottom
			%Annotations
			\node[align=center](centre) at ($ (Position) + (.5,1) $){$ \mathrm{Sig}^M $}; %centre
			\node[align=center,font=\small, rotate=90](left) at ($ (Position) + (-.53,0.5) $){Steps}; %left
			\node[align=center,font=\small](bottom) at ($ (Position) + (.5,-0.8) $){Signature \\ terms}; %bottom
			%Arrow
			\draw[black, thick, ->] ($ (Position) + (1.25 + \otheroffset,1) $) -- ++(1 + \offset,0) -- ++(.8,.8);

		\end{tikzpicture}
	}
	\caption{Generative model architecture. Trainable parameters: $\theta_1, \theta_2$. There is an implicit batch dimension throughout.}
	\label{GAN-arch}	
\end{figure}

Let the input to the network be time-augmented Brownian motion
\begin{equation*}
\mathbf{B} = ((t_1, B_{t_1}), \ldots, (t_n, B_{t_n})) \in \mathcal S(\mathbb R^2).
\end{equation*}
Given two stream-preserving neural networks $\Phi^{\theta_1}$ and $f^{\theta_2}$, and a lift $\ell$, then the generative model is defined by
\begin{equation*}
\mathbf x = (f^{\theta_2} \circ \mathrm{Sig}^N \circ \ell \circ \Phi^{\theta_1})(\mathbf{B}).
\end{equation*}

The overall model is shown in Figure \ref{GAN-arch}.  In a nice twist, both the generator and the discriminator involve the signature.

\newlength{\backupintextsep}
\setlength{\backupintextsep}{\intextsep}
\setlength{\intextsep}{-4pt}
\begin{wrapfigure}[14]{r}{0pt}
\includegraphics[width=0.5\textwidth]{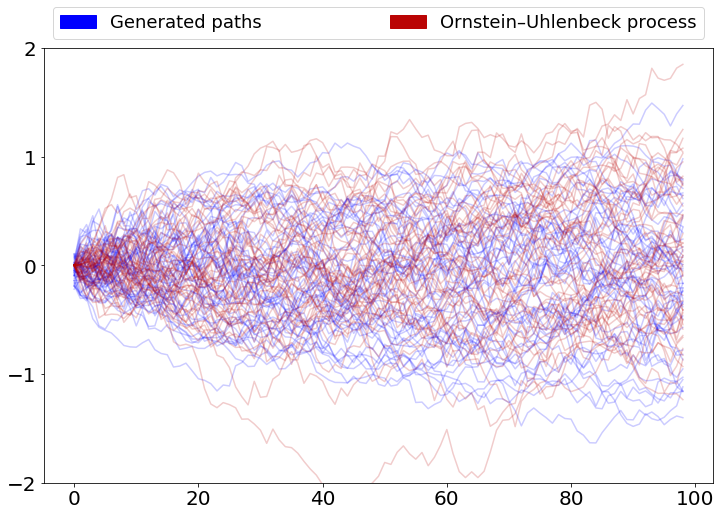}
\captionsetup{width=0.9\linewidth}
\caption{Generated paths alongside the original paths.}
\label{fig:OU}
\end{wrapfigure}

% this tiny adjustment lets the `for some realisation of B^H' fit on to this page
% it's also actually not adjusting where this paragraph appears. It's compensating for extra spacing added to this paragraph due to nudging the above Figure up slightly
\vspace{-1pt}Observe \setlength{\intextsep}{\backupintextsep} how the generative part is a particular case of the deep signature model, and that furthermore the whole generator-discriminator pair is also a particular case of the deep signature model, with the trivial lift of equation \eqref{eq:nolift} before the second signature layer.

We applied the proposed model to a dataset of $1024$ realisations of an Ornstein--Uhlenbeck process \cite{uhlenbeck1930theory}. The loss was minimised at $6.6 \times 10^{-4}$, which implies that the generated paths are statistically almost indistinguishable from the real Ornstein--Uhlenbeck process. Figure \ref{fig:OU} shows the generated paths alongside the original ones. Further implementation details are in Appendix \ref{appendix:implementation}.
	
\subsection{Supervised learning with fractional Brownian motion}
Fractional Brownian motion \cite{mishura2008stochastic} is a Gaussian process $B^H \colon [0,\infty) \rightarrow \mathbb{R}$ that generalises Brownian motion. It is self-similar and exhibits fractal-like behaviour. %Unlike classical Brownian motion, increments of fractional Brownian motion need not be independent.
Fractional Brownian motion depends upon a parameter $H \in (0, 1)$, known as the Hurst parameter. Lower Hurst parameters result in noticeably rougher paths. The case of $H = 1/2$ corresponds to usual Brownian motion. %Examples are shown in Figure \ref{fig:simulation}; note how
Fractional Brownian motion has been successfully used to model phenomena in diverse fields. For example, empirical evidence from financial markets \cite{gatheral2014volatility} suggests that log-volatility is well modelled by fractional Brownian motion with Hurst parameter $H \approx 0.1$.

%Fractional Brownian motion has been successfully used to model phenomena in diverse fields. For example, empirical evidence from financial markets \cite{gatheral2014volatility} suggests that log-volatility is well modelled by fractional Brownian motion with Hurst parameter $H \approx 0.1$. The case of $H = \frac{1}{2}$ corresponds to usual Brownian motion.

Estimating the Hurst parameter of a fractional Brownian motion path is considered a nontrivial task because of the paths' non-stationarity and long range dependencies \cite{lacasa2009visibility}. We train a variety of models to perform this estimation. That is, to learn the map $\mathbf x^H \mapsto H$,
where
\begin{equation*}
\mathbf x^H = ((t_0, B^H_{t_0}), \ldots, (t_n, B^H_{t_n})) \in \mathcal S(\mathbb R^2)
\end{equation*}
for some realisation of $B^H$.

\setlength{\intextsep}{0pt}
\begin{wrapfigure}[20]{r}{0pt}
\begin{minipage}{0.5\textwidth}
	\centering	
	\includegraphics[width=\textwidth]{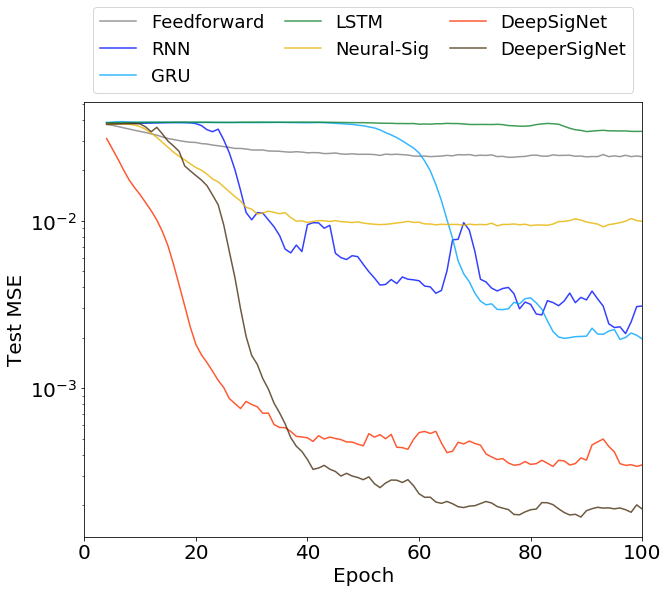}
	\captionsetup{width=0.9\linewidth}
	\caption{Performance at estimating the Hurst parameter for various models, with and without signatures, for a particular (typical) training run.}
	\label{fig:fbm}
\end{minipage}
\end{wrapfigure}

The \setlength{\intextsep}{\backupintextsep} results are shown in Figure \ref{fig:fbm} and Table \ref{table:fbm results}. Also shown in Table \ref{table:fbm results} are the results of the rescaled range method \cite{Hurst1951}, which is a mathematically derived method rather than a learned method.

RNN, GRU and LSTM models provide baselines in the context of recurrent neural networks. %The data was flattened and fed into a non-recurrent feedforward model, referred to as Feedforward, as a baseline in the context of neural networks, whilst
The simple Neural-Sig model outlined previously in Figure \ref{fig:neural-sig} provides a baseline from the context of signatures.% RNN, GRU and LSTM models provide baselines in the context of recurrent neural networks.

DeepSigNet and DeeperSigNet are both deep signature models of the form given by Figure \ref{fig:deep signatures}. DeepSigNet has a single large Neural-Lift-Signature block, whilst DeeperSigNet has three smaller ones.

We observe that traditional signature based models perform slightly worse than traditional recurrent models, but that deep signature models outperform all other models by at least an order of magnitude. Further implementation details are found in Appendix \ref{appendix:implementation}.
\vspace{0.5em}  % something's off with the wrapfigure spacing and I'm not sure how to fix it; this is a workaround

%\begin{figure}[h]
%\begin{minipage}{0.5\textwidth}
%	\centering
%	\includegraphics[width=\linewidth]{figures/hurst}
%	\captionsetup{width=0.9\linewidth}
%	\caption{Performance at estimating the Hurst parameter for various models, with and without signatures, for a particular training run.}
%	\label{fig:fbm}
%\end{minipage}
%\begin{minipage}{0.5\textwidth}
%	\centering
%	\captionsetup{width=0.9\linewidth}
%	\captionof{table}{Final test loss, ordered from largest to smallest, averaged over 3 training runs. All methods gave consistent results between training runs.
%	\label{table:fbm results}
%	\begin{tabular}{l c c}
%		\toprule
%		{} & {Test MSE} & {\# Params} \\
%		\cmidrule(r){2-3}
%		$\text{Rescaled Range}$ & $7.2 \times 10^{-2}$ & N/A \\
%		$\text{LSTM}$  & $4.3 \times 10^{-2}$ & 12961  \\
%		$\text{Feedforward}$ & $2.8 \times 10^{-2}$ & 10209 \\
%		$\text{Neural-Sig}$  & $1.1 \times 10^{-2}$ & 10097 \\
%		$\text{GRU}$  & $3.3 \times 10^{-3}$ & 9729   \\
%		$\text{RNN}$  & $1.7 \times 10^{-3}$ & 10091  \\	
%		$\text{DeepSigNet}$  & $2.1 \times 10^{-4}$ & 9261  \\
%		$\text{DeeperSigNet}$  & $1.6 \times 10^{-4}$ & 9686  \\
%		\bottomrule
%	\end{tabular}
%	\vspace{4em}
%\end{minipage}
%\end{figure}

\begin{table}[t]
	\centering
	\vspace{4em}
	\caption{Final test mean squared error (MSE) for the different models, averaged over 3 training runs, ordered from largest to smallest.}
	\label{table:fbm results}
	\begin{tabular}{l c c c}
		\toprule
		{} & \multicolumn{2}{c}{Test MSE}\\
		\cmidrule(r){2-3}
		{} & {Mean} & {Variance} & {\# Params} \\
		\cmidrule(r){2-4}
		$\text{Rescaled Range}$ & $7.2 \times 10^{-2}$ &$3.7 \times 10^{-3}$ & N/A \\
		$\text{LSTM}$  & $4.3 \times 10^{-2}$ &$8.0 \times 10^{-3}$ & 12961  \\
		$\text{Feedforward}$ & $2.8 \times 10^{-2}$ & $3.0 \times 10^{-3}$ & 10209 \\
		$\text{Neural-Sig}$  & $1.1 \times 10^{-2}$ &$8.2 \times 10^{-4}$ & 10097 \\
		$\text{GRU}$  & $3.3 \times 10^{-3}$ &$1.3 \times 10^{-3}$ & 9729   \\
		$\text{RNN}$  & $1.7 \times 10^{-3}$ &$4.9 \times 10^{-4}$ & 10091  \\	
		$\text{DeepSigNet}$  & $2.1 \times 10^{-4}$ &$8.7 \times 10^{-5}$ & 9261  \\
		$\text{DeeperSigNet}$  & $1.6 \times 10^{-4}$ &$2.1 \times 10^{-5}$ & 9686  \\
		\bottomrule
	\end{tabular}
\end{table}

\subsection{Non-Markovian deep reinforcement learning}\label{section:rl}

Finally we show how these ideas may be extended, by demonstrating a model that adds a residual connection to the deep signature model; it may also be interpreted as using signatures as the memory of a recurrent neural network.

As an example, we apply this architecture to tackle a non-Markovian reinforcement learning problem. This means that the optimal action depends not just on the current state of the environment, but upon the history of past states, so that the agent must maintain a memory.%Many tasks in reinforcement learning are Markovian, in the sense that the optimal action for the agent to take only depends on the current state of the environment, and not the past history of states. However if this is not the case then the task is usually harder to solve \cite{whitehead1995reinforcement}, and the agent is then required to maintain a memory in order to retain information about past states.

Let  $\Phi^{\theta_1} \colon \mathbb R^d \to \mathbb R^e$ and $f^{\theta_2} \colon \mathbb R^{d + (e^{N + 1} - 1)/(e - 1)} \to \{\mathrm{actions}\}$ be functions depending on learnable parameters $\theta_1$, $\theta_2$. Given input $x_i \in \mathbb R^d$ at time $i$, let
\begin{equation*}
y_i = \Phi^{\theta_1}(x_i),\qquad\sigma_i = \sigma_{i - 1} \otimes \mathrm{Sig}^N((y_{i - 1}, y_{i})),\qquad a_i = f^{\theta_2}(x_i, \sigma_i),
\end{equation*}
where $a_i$ is the action proposed by the network at time $i$, and $y_i$ and $\sigma_i$ are the memory at time $i$, and $\otimes$ denotes the tensor product as in \ref{def:tensor-product} in Appendix \ref{appendix:sigprop}.

The model is summarised in Figure \ref{fig:rl-model-rnn} as a recurrent neural network with signature-based memory. Note that $y_i$ is preserved in memory only to compute the signature at the next time step, as the shortest path it is meaningful to compute the signature of is of length two.

However, note that by Proposition \ref{prop:chenefficient} in Appendix \ref{appendix:sigprop},
\begin{equation*}
\sigma_i = \mathrm{Sig}^N (\Phi^{\theta_1}(x_1), \ldots, \Phi^{\theta_1}(x_i)) \in \mathbb R^{(e^{N + 1} - 1)/(e - 1)}.
\end{equation*}

\setlength{\intextsep}{0pt}
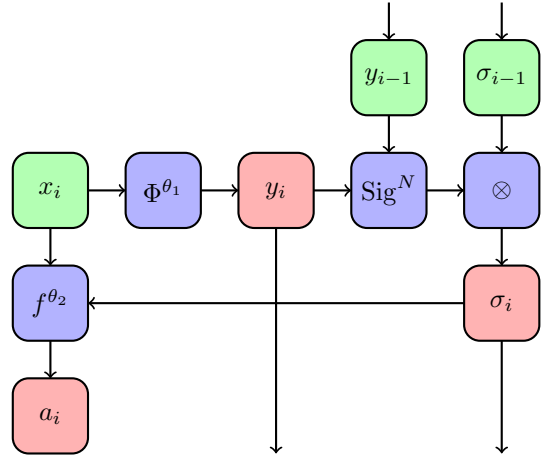
\begin{wrapfigure}[16]{r}{0pt}
		\begin{tikzpicture}
		\node (Position) at (-1.5,0){};
			\draw[rounded corners=\corners, black, fill=green!30, thick] (Position) rectangle ++(1, 1);
			\node[text width=4em, text centered] at ($(Position) + (0.5, 0.5)$) {$x_i$};
		\draw[black, thick, ->] (-0.5,0.5) -- (0,0.5);
		\node (Position) at (0,0){};
			\draw[rounded corners=\corners, black, fill=blue!30, thick] (Position) rectangle ++(1, 1);
			\node[text width=4em, text centered] at ($(Position) + (0.5, 0.5)$) {$\Phi^{\theta_1}$};
		\draw[black, thick, ->] (1,0.5) -- (1.5,0.5);
		\node (Position) at (1.5,0){};
			\draw[rounded corners=\corners, black, fill=red!30, thick] (Position) rectangle ++(1, 1);
			\node[text width=4em, text centered] at ($(Position) + (0.5, 0.5)$) {$y_i$};
		\draw[black, thick, ->] (2.5,0.5) -- (3,0.5);
		\draw[black, thick, ->] (2,0) -- (2,-3);
		\node (Position) at (3,0){};
			\draw[rounded corners=\corners, black, fill=blue!30, thick] (Position) rectangle ++(1, 1);
			\node[text width=4em, text centered] at ($(Position) + (0.5, 0.5)$) {$\mathrm{Sig}^N$};
		\draw[black, thick, ->] (3.5,3) -- (3.5,2.5);
		\draw[black, thick, ->] (3.5,1.5) -- (3.5,1);
		\node (Position) at (3,1.5){};
			\draw[rounded corners=\corners, black, fill=green!30, thick] (Position) rectangle ++(1, 1);
			\node[text width=4em, text centered] at ($(Position) + (0.5, 0.5)$) {$y_{i - 1}$};
		\draw[black, thick, ->] (4,0.5) -- (4.5,0.5);
		\node (Position) at (4.5,0){};
			\draw[rounded corners=\corners, black, fill=blue!30, thick] (Position) rectangle ++(1, 1);
			\node[text width=4em, text centered] at ($(Position) + (0.5, 0.5)$) {$\otimes$};
		\draw[black, thick, ->] (5,0) -- (5,-0.5);
		\draw[black, thick, ->] (5,-1.5) -- (5,-3);
		\node (Position) at (4.5,-1.5){};
			\draw[rounded corners=\corners, black, fill=red!30, thick] (Position) rectangle ++(1, 1);
			\node[text width=4em, text centered] at ($(Position) + (0.5, 0.5)$) {$\sigma_{i}$};
		\draw[black, thick, ->] (4.5, -1) -- (-0.5, -1);
		\draw[black, thick, ->] (5,1.5) -- (5,1);
		\draw[black, thick, ->] (5,3) -- (5,2.5);
		\node (Position) at (4.5,1.5){};
			\draw[rounded corners=\corners, black, fill=green!30, thick] (Position) rectangle ++(1, 1);
			\node[text width=4em, text centered] at ($(Position) + (0.5, 0.5)$) {$\sigma_{i - 1}$};
		\draw[black, thick, ->] (-1,0) -- (-1,-0.5);
		\node (Position) at (-1.5,-1.5){};
			\draw[rounded corners=\corners, black, fill=blue!30, thick] (Position) rectangle ++(1, 1);
			\node[text width=4em, text centered] at ($(Position) + (0.5, 0.5)$) {$f^{\theta_2}$};
		\draw[black, thick, ->] (-1,-1.5) -- (-1,-2);
		\node (Position) at (-1.5,-3){};
			\draw[rounded corners=\corners, black, fill=red!30, thick] (Position) rectangle ++(1, 1);
			\node[text width=4em, text centered] at ($(Position) + (0.5, 0.5)$) {$a_i$};	
		\end{tikzpicture}
		\caption{Agent architecture as a recurrent network. Trainable parameters: $\theta_1, \theta_2$.}
		\label{fig:rl-model-rnn}
\end{wrapfigure}

Furthermore  \setlength{\intextsep}{\backupintextsep} the $x_i$, $y_i$, $\sigma_i$ and $a_i$ may be collected into streams
\begin{align*}
(x_i)_i &\in \mathcal S(\mathbb R^d),\\
(y_i)_i &\in \mathcal S(\mathbb R^e),\\
(\sigma_i)_i &\in \mathcal S(\mathbb R^{(e^{N + 1} - 1)/(e - 1)}),\\
(a_i)_i &\in \mathcal S(\{\mathrm{actions}\}).
\end{align*}

In this way we may interpret this model as a generalisation of deep signature model: it has a single Neural-Lift-Signature block, with a skip connection across the whole block. The neural component is given by the neural network $\Phi^{\theta_1}$, which is stream-preserving as it operates pointwise, in the manner of equation \eqref{eq:pointwise}. The lift is the `expanding window' lift given by equation \eqref{eq:expanding}. Finally $f^{\theta_2}$ is another neural network, which is again pointwise and thus stream-preserving.

This interpretation of the model is demonstrated in Figure \ref{fig:rl-model-resnet}.

We test this model on a non-Markovian modification to the classical Mountain Car problem \cite{gym}, in which the agent receives only partial information: it is only given the car's position, and not its velocity.%\footnote{Thus this belongs to a particular class of non-Markov problems called Hidden Markov Models \cite{rabiner1986introduction}.}
We find that it is capable of learning how to solve the problem within a set number of episodes, whilst a comparable RNN architecture fails to do so. The reinforcement learning technique used was Deep Q Learning \cite{mnih2015human} with the specified models performing function approximation on $Q$. Both models were chosen to have comparable numbers of parameters. Further implementation details can be found in Appendix \ref{appendix:implementation}.

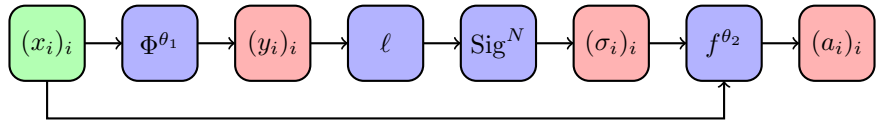
\begin{figure}[h]
	\centering
	\begin{tikzpicture}
		\node (Position) at (-1.5,0){};
			\draw[rounded corners=\corners, black, fill=green!30, thick] (Position) rectangle ++(1, 1);
			\node[text width=4em, text centered] at ($(Position) + (0.5, 0.5)$) {$(x_i)_i$};
		\draw[black, thick, ->] (-0.5,0.5) -- (0,0.5);
		\draw[black, thick, ->] (-1,0) -- (-1,-0.5) -- (8, -0.5) -- (8, 0);	
		\node (Position) at (0,0){};
			\draw[rounded corners=\corners, black, fill=blue!30, thick] (Position) rectangle ++(1, 1);
			\node[text width=4em, text centered] at ($(Position) + (0.5, 0.5)$) {$\Phi^{\theta_1}$};
		\draw[black, thick, ->] (1,0.5) -- (1.5,0.5);
		\node (Position) at (1.5,0){};
			\draw[rounded corners=\corners, black, fill=red!30, thick] (Position) rectangle ++(1, 1);
			\node[text width=4em, text centered] at ($(Position) + (0.5, 0.5)$) {$(y_i)_i$};
		\draw[black, thick, ->] (2.5,0.5) -- (3,0.5);
		\node (Position) at (3,0){};
			\draw[rounded corners=\corners, black, fill=blue!30, thick] (Position) rectangle ++(1, 1);
			\node[text width=4em, text centered] at ($(Position) + (0.5, 0.5)$) {$\ell$};
		\draw[black, thick, ->] (4,0.5) -- (4.5,0.5);
		\node (Position) at (4.5,0){};
			\draw[rounded corners=\corners, black, fill=blue!30, thick] (Position) rectangle ++(1, 1);
			\node[text width=4em, text centered] at ($(Position) + (0.5, 0.5)$) {$\mathrm{Sig}^N$};
		\draw[black, thick, ->] (5.5,0.5) -- (6,0.5);
		\node (Position) at (6,0){};
			\draw[rounded corners=\corners, black, fill=red!30, thick] (Position) rectangle ++(1, 1);
			\node[text width=4em, text centered] at ($(Position) + (0.5, 0.5)$) {$(\sigma_{i})_i$};
		\draw[black, thick, ->] (7,0.5) -- (7.5,0.5);
		\node (Position) at (7.5,0){};
			\draw[rounded corners=\corners, black, fill=blue!30, thick] (Position) rectangle ++(1, 1);
			\node[text width=4em, text centered] at ($(Position) + (0.5, 0.5)$) {$f^{\theta_2}$};
		\draw[black, thick, ->] (8.5,0.5) -- (9,0.5);
		\node (Position) at (9,0){};
			\draw[rounded corners=\corners, black, fill=red!30, thick] (Position) rectangle ++(1, 1);
			\node[text width=4em, text centered] at ($(Position) + (0.5, 0.5)$) {$(a_i)_i$};	
		\end{tikzpicture}
		\caption{Agent architecture as a residual network. Trainable parameters: $\theta_1, \theta_2$. The lift $\ell$ is the `expanding window' lift of equation \eqref{eq:expanding}.}
		\label{fig:rl-model-resnet}
\end{figure}

\section{Conclusion}
There is a strong corpus of theory motivating the use of the signature transform as a tool to understand streams of data. Meanwhile neural networks have enjoyed great empirical success. It is thus desirable to bring them together; in this paper we have described how this may be done in a general fashion, and have provided examples of how this principle may be used in a variety of domains.

There are two key contributions. First, we discuss stream-preserving neural networks, which are what allow for using signature transforms deeper within a network, rather than as just a feature transformation. Second, we discuss lifts, which is what allows for the use of multiple signature transforms. In this way we have significantly extended the use of the signature transform in machine learning: rather than limiting its usage to data preprocessing, we demonstrate how the signature transform, as a univeral nonlinearity, may be used as a pooling layer within a neural network.

	\subsubsection*{Acknowledgements}
	PB was supported by the EPSRC grant EP/R513295/1. PK was supported by the EPSRC grant EP/L015811/1. PK, IPA, CS, TL were supported by the Alan Turing Institute under the EPSRC grant EP/N510129/1.
	
	\small
	\bibliography{references} 

\begin{thebibliography}{10}

\bibitem{primer2016}
I.~Chevyrev and A.~Kormilitzin, ``A primer on the signature method in machine
  learning,'' {\em arXiv preprint arXiv:1603.03788}, 2016.

\bibitem{Chen54}
K.~T. Chen, ``Iterated integrals and exponential homomorphisms,'' {\em Proc.
  London Math. Soc, 4, 502--512}, 1954.

\bibitem{Chen57}
K.~T. Chen, ``{Integration of paths, geometric invariants and a generalized
  Baker-Hausdorff formula},'' {\em Ann. of Math. (2), 65:163--178}, 1957.

\bibitem{Chen58}
K.~T. Chen, ``Integration of paths - a faithful representation of paths by
  non-commutative formal power series,'' {\em Trans. Amer. Math. Soc. 89
  (1958), 395--407}, 1958.

\bibitem{lyonsfinance2014}
T.~Lyons, H.~Ni, and H.~Oberhauser, ``A feature set for streams and an
  application to high-frequency financial tick data,'' {\em ICBDC}, 2014.

\bibitem{gyurko2014}
L.~G. Gyurkó, T.~Lyons, M.~Kontkowski, and J.~Field, ``Extracting information
  from the signature of a financial data stream,'' {\em arXiv preprint
  arXiv:1307.7244}, 2014.

\bibitem{lyons2019nonparametric}
T.~Lyons, S.~Nejad, and I.~P. Arribas, ``Nonparametric pricing and hedging of
  exotic derivatives,'' {\em arXiv preprint arXiv:1905.00711}, 2019.

\bibitem{lyons2019model}
T.~Lyons, S.~Nejad, and I.~P. Arribas, ``Model-free pricing and hedging in
  discrete time using rough path signatures,'' {\em arXiv preprint
  arXiv:1905.01720}, 2019.

\bibitem{kalsi2019optimal}
J.~Kalsi, T.~Lyons, and I.~P. Arribas, ``Optimal execution with rough path
  signatures,'' {\em arXiv preprint arXiv:1905.00728}, 2019.

\bibitem{lyons1998differential}
T.~J. Lyons, ``Differential equations driven by rough signals,'' {\em Revista
  Matem{\'a}tica Iberoamericana}, vol.~14, no.~2, pp.~215--310, 1998.

\bibitem{FritzVictoir10}
P.~K. Friz and N.~B. Victoir, ``Multidimensional stochastic processes as rough
  paths: theory and applications,'' {\em Cambridge University Press}, 2010.

\bibitem{yang2015chinese}
W.~Yang, L.~Jin, and M.~Liu, ``{Chinese character-level writer identification
  using path signature feature, DropStroke and deep CNN},'' in {\em 2015 13th
  International Conference on Document Analysis and Recognition (ICDAR)},
  pp.~546--550, IEEE, 2015.

\bibitem{xie2018learning}
Z.~Xie, Z.~Sun, L.~Jin, H.~Ni, and T.~Lyons, ``{Learning spatial-semantic
  context with fully convolutional recurrent network for online handwritten
  Chinese text recognition},'' {\em IEEE transactions on pattern analysis and
  machine intelligence}, vol.~40, no.~8, pp.~1903--1917, 2018.

\bibitem{yang2016dropsample}
W.~Yang, L.~Jin, D.~Tao, Z.~Xie, and Z.~Feng, ``{DropSample: A new training
  method to enhance deep convolutional neural networks for large-scale
  unconstrained handwritten Chinese character recognition},'' {\em Pattern
  Recognition}, vol.~58, pp.~190--203, 2016.

\bibitem{yang2016deepwriterid}
W.~Yang, L.~Jin, and M.~Liu, ``{Deepwriterid: An end-to-end online
  text-independent writer identification system},'' {\em IEEE Intelligent
  Systems}, vol.~31, no.~2, pp.~45--53, 2016.

\bibitem{li2017lpsnet}
C.~Li, X.~Zhang, and L.~Jin, ``{LPSNet: a novel log path signature feature
  based hand gesture recognition framework},'' in {\em Proceedings of the IEEE
  International Conference on Computer Vision}, pp.~631--639, 2017.

\bibitem{yang2017leveraging}
W.~Yang, T.~Lyons, H.~Ni, C.~Schmid, L.~Jin, and J.~Chang, ``Leveraging the
  path signature for skeleton-based human action recognition,'' {\em arXiv
  preprint arXiv:1707.03993}, 2017.

\bibitem{yang2016rotation}
W.~Yang, L.~Jin, H.~Ni, and T.~Lyons, ``Rotation-free online handwritten
  character recognition using dyadic path signature features, hanging
  normalization, and deep neural network,'' in {\em 2016 23rd International
  Conference on Pattern Recognition (ICPR)}, pp.~4083--4088, IEEE, 2016.

\bibitem{kiraly2016kernels}
F.~J. Kir{\'a}ly and H.~Oberhauser, ``Kernels for sequentially ordered data,''
  {\em Journal of Machine Learning Research}, 2019.

\bibitem{chevyrev2018signature}
I.~Chevyrev and H.~Oberhauser, ``Signature moments to characterize laws of
  stochastic processes,'' {\em arXiv preprint arXiv:1810.10971}, 2018.

\bibitem{lyons2014rough}
T.~Lyons, ``Rough paths, signatures and the modelling of functions on
  streams,'' {\em arXiv preprint arXiv:1405.4537}, 2014.

\bibitem{signatory}
P.~Kidger, ``{Signatory: differentiable computations of the signature and
  logsignature transforms, on both CPU and GPU},'' 2019.
\newblock \texttt{https://github.com/patrick-kidger/signatory}.

\bibitem{Silvescu1999Fourier}
A.~Silvescu, ``Fourier neural networks,'' {\em Proceedings of the International
  Joint Conference On Neural Networks, IEEE}, 1999.

\bibitem{Mingo2004Fourier}
L.~Mingo, L.~Aslanyan, J.~Castellanos, M.~Diaz, and V.~Riazanov, ``Fourier
  neural networks: an approach with sinusoidal activation functions,'' {\em
  Int. J. Inf. Theory Appl., 11}, 2004.

\bibitem{Gashler2016Fourier}
M.~Gashler and S.~Ashmore, ``Modeling time series data with deep fourier neural
  networks,'' {\em Neurocomputing}, 2016.

\bibitem{Zhang1992wavelet}
Q.~Zhang and A.~Benveniste, ``Wavelet networks,'' {\em IEEE Trans. Neural
  Netw.}, 1992.

\bibitem{pinkus}
A.~Pinkus, ``Approximation theory of the {MLP} model in neural networks,'' {\em
  Acta Numer.}, vol.~8, pp.~143--195, 1999.

\bibitem{lyons2018inverting}
T.~J. Lyons and W.~Xu, ``Inverting the signature of a path,'' {\em Journal of
  the European Mathematical Society}, vol.~20, no.~7, pp.~1655--1687, 2018.

\bibitem{chang2017signature}
J.~Chang, N.~Duffield, H.~Ni, W.~Xu, {\em et~al.}, ``Signature inversion for
  monotone paths,'' {\em Electronic Communications in Probability}, vol.~22,
  2017.

\bibitem{chang2018effective}
J.~Chang, {\em Effective algorithms for inverting the signature of a path}.
\newblock PhD thesis, University of Oxford, 2018.

\bibitem{UCI}
D.~Dua and C.~Graff, ``{{UCI} Machine Learning Repository},'' 2017.

\bibitem{goodfellow2014generative}
I.~Goodfellow, J.~Pouget-Abadie, M.~Mirza, B.~Xu, D.~Warde-Farley, S.~Ozair,
  A.~Courville, and Y.~Bengio, ``Generative adversarial nets,'' in {\em
  Advances in neural information processing systems}, pp.~2672--2680, 2014.

\bibitem{gmmn2015}
Y.~Li, K.~Swersky, and R.~Zemel, ``Generative moment matching networks,'' {\em
  ICML}, 2015.

\bibitem{traininggmmns2015}
G.~K. Dziugaite, D.~M. Roy, and Z.~Ghahramani, ``Training generative neural
  networks via maximum mean discrepancy optimization,'' {\em UAI}, 2015.

\bibitem{gretton2017}
A.~Gretton, K.~M. Borgwardt, M.~Rasch, B.~Scholkopf, and A.~J. Smola, ``A
  kernel method for the two-sample problem,'' {\em Advances in Neural
  Information Processing Systems}, 2007.

\bibitem{uhlenbeck1930theory}
G.~E. Uhlenbeck and L.~S. Ornstein, ``{On the theory of the Brownian motion},''
  {\em Physical review}, vol.~36, no.~5, p.~823, 1930.

\bibitem{mishura2008stochastic}
Y.~Mishura, {\em {Stochastic calculus for fractional Brownian motion and
  related processes}}, vol.~1929.
\newblock Springer Science \& Business Media, 2008.

\bibitem{gatheral2014volatility}
J.~Gatheral, T.~Jaisson, and M.~Rosenbaum, ``{Volatility is Rough},'' {\em
  Quantitative Finance, 18:6, 933-949}, 2018.

\bibitem{lacasa2009visibility}
L.~Lacasa, B.~Luque, J.~Luque, and J.~C. Nuno, ``{The visibility graph: A new
  method for estimating the Hurst exponent of fractional Brownian motion},''
  {\em EPL (Europhysics Letters)}, vol.~86, no.~3, p.~30001, 2009.

\bibitem{Hurst1951}
H.~Hurst, ``{The Long-Term Storage Capacity of Reservoirs},'' {\em Transactions
  of the American Society of Civil Engineers}, 1951.

\bibitem{gym}
G.~Brockman, V.~Cheung, L.~Pettersson, J.~Schneider, J.~Schulman, J.~Tang, and
  W.~Zaremba, ``Openai gym,'' {\em arXiv preprint arXiv:1606.01540}, 2016.

\bibitem{mnih2015human}
V.~Mnih, K.~Kavukcuoglu, D.~Silver, A.~A. Rusu, J.~Veness, M.~G. Bellemare,
  A.~Graves, M.~Riedmiller, A.~K. Fidjeland, G.~Ostrovski, {\em et~al.},
  ``Human-level control through deep reinforcement learning,'' {\em Nature},
  vol.~518, no.~7540, p.~529, 2015.

\bibitem{hambly2010uniqueness}
B.~M. {Hambly} and T.~J. {Lyons}, ``Uniqueness for the signature of a path of
  bounded variation and the reduced path group,'' {\em Annals of Mathematics},
  vol.~171, no.~1, pp.~109--167, 2010.

\bibitem{perezarribas2018}
I.~Perez~Arribas, ``Derivatives pricing using signature payoffs,'' {\em arXiv
  preprint arXiv:1809.09466}, 2018.

\bibitem{kingma2015}
D.~Kingma and J.~Ba, ``Adam: A method for stochastic optimization,'' {\em
  ICLR}, 2015.

\bibitem{paszke2017automatic}
A.~Paszke, S.~Gross, S.~Chintala, G.~Chanan, E.~Yang, Z.~DeVito, Z.~Lin,
  A.~Desmaison, L.~Antiga, and A.~Lerer, ``{Automatic differentiation in
  PyTorch},'' 2017.

\bibitem{iisignature}
J.~Reizenstein and B.~Graham, ``The iisignature library: efficient calculation
  of iterated-integral signatures and log signatures,'' {\em arXiv preprint
  arXiv:1802.08252}, 2018.

\bibitem{embrechts2009selfsimilar}
P.~Embrechts, {\em Selfsimilar processes}, vol.~21.
\newblock Princeton University Press, 2009.

\bibitem{watkins1989learning}
C.~J. C.~H. Watkins, ``Learning from delayed rewards,'' 1989.

\end{thebibliography}
	\bibliographystyle{ieeetr}
	
	\normalsize
	\newpage % Need to start it on a new page so we can submit appendices separately, I think
	\appendix
\section{A brief overview of signatures}\label{appendix:sigprop}
This appendix is split into three subsections. The first subsection discusses the definition and properties of the signature transform on path space, which is the mathematically natural way to approach things. The second subsection goes on to adapt the signature transform to the space of streams of data.

In the third subsection we discuss how to compute the signature transform. In particular we will see that whilst the signature transform of a path may look somewhat unfriendly to compute, it will (fortunately!) turn out that the signature transform may be efficiently computed in the special case that its input is piecewise linear, which is how a stream of data is interpreted.

\subsection{Signatures of paths}
We begin with the definition of the signature. Note that this definition is written in a slightly different format to that of Definition \ref{sig-definition}, as the more traditional (if somewhat unfriendly-looking) notation of stochastic calculus is used; however the mathematical meaning is the same.
\begin{definition}[\cite{lyons1998differential}]\label{def:sigstochasticnotation}
	Let $a, b \in \mathbb R$, and $X = (X^1, \ldots, X^d) \colon [a, b] \to \mathbb R^d$ be a continuous piecewise smooth path.  The signature of $X$ is then defined as the collection of iterated integrals
	\begin{align*}
	\mathrm{Sig}(X) &= \left( \underset{a < t_1 < \cdots < t_k < b}{\int\cdots\int} \mathrm dX_{t_1} \otimes \cdots \otimes  \mathrm dX_{t_k} \right)_{k \geq 0}\\
	&= \left(\left( \underset{a < t_1 < \cdots < t_k < b}{\int\cdots\int} \mathrm dX^{i_1}_{t_1} \cdots \mathrm dX^{i_k}_{t_k} \right)_{1 \leq i_1, \ldots, i_k \leq d}\right)_{k\geq 0},
	\end{align*}
	where $\otimes$ denotes the tensor product, $ \mathrm dX_t = \frac{\mathrm dX_t}{\mathrm dt}{\mathrm dt} $, and the $k = 0$ term is taken to be $1 \in \mathbb R$.
	
	The truncated signature of depth $N$ of $X$ is defined as
	\begin{equation*}
	\mathrm{Sig}^N(X)= \left( \underset{a < t_1 < \cdots < t_k < b}{\int\cdots\int} \mathrm dX_{t_1} \otimes \cdots \otimes  \mathrm dX_{t_k} \right)_{0 \leq k \leq N}.
	\end{equation*}
\end{definition}
The signature may in fact be defined much more generally, on paths of merely bounded variation, see \cite{lyons1998differential, FritzVictoir10}, but the above definition suffices for our purposes. This broader theory is also the reason for the notation $\mathrm dX_t$, which may be made sense of even when $X$ is not differentiable.

\begin{example} \label{ex:sigincrement}
	Suppose $X \colon [a, b] \to \mathbb R^d$ is the linear interpolation of two points $ x,y \in \mathbb R^d $, so that $ X_t = x + \frac{t-a}{b-a}(y-x) $ . Then its signature is just the collection of powers of its total increment:
	\begin{equation*}
	\mathrm{Sig}(X) = \left(1, y-x, \frac{1}{2} (y-x)^{\otimes 2}, \frac{1}{6} (y-x)^{\otimes 3}, \ldots, \frac{1}{k!} (y-x)^{\otimes k}, \ldots \right).
	\end{equation*}
	Which is independent of $a$, $b$.
\end{example}

\begin{definition}
Given a path $X \colon [a, b] \to \mathbb R^d$, we define the corresponding \emph{time-augmented path} by $\widehat{X}_t = (t, X_t)$, which is a path in $\mathbb R^{d + 1}$.
\end{definition}

The signature exhibits four key properties that makes its use attractive when dealing with path-like data. First, a path is essentially defined by its signature. This means that essentially no information is lost when applying the signature transform.
\begin{proposition}[Uniqueness of signature \cite{hambly2010uniqueness}]\label{prop:uniqueness}
	Let $X \colon [a, b] \to \mathbb R^d$ be a continuous piecewise smooth path. Then $\mathrm{Sig}(\widehat{X})$ uniquely determines $X$ up to translation.
\end{proposition}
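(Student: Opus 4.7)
The plan is to exploit the distinguished time coordinate of $\widehat{X}$ to recover, for each spatial component, the corresponding signed measure $dX^i$ directly from specific entries of $\mathrm{Sig}(\widehat{X})$. Writing $\widehat{X}^0_t = t$ and $\widehat{X}^i_t = X^i_t$ for $i=1,\ldots,d$, I would single out the iterated integrals with multi-index $(0, 0, \ldots, 0, i)$ consisting of $k$ zeros followed by a single $i$. Successively evaluating the inner $dt$ integrals shows
\begin{equation*}
\int\!\cdots\!\int_{a < t_1 < \cdots < t_{k+1} < b} dt_1 \cdots dt_k\, dX^i_{t_{k+1}} \;=\; \int_a^b \frac{(t-a)^k}{k!}\, dX^i_t,
\end{equation*}
so every polynomial moment of the finite signed measure $dX^i$ is explicitly encoded in $\mathrm{Sig}(\widehat{X})$.

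The second step is a density/moment argument. Because $X^i$ is piecewise smooth, $dX^i$ is a finite signed Borel measure on $[a,b]$; by Weierstrass, polynomials are uniformly dense in $C([a,b])$; hence knowing $\int_a^b p\, dX^i$ for every polynomial $p$ determines $\int_a^b f\, dX^i$ for every continuous $f$, and Riesz representation then pins down $dX^i$ uniquely. Equivalently, $X^i_t - X^i_a$ is recovered for all $t \in [a,b]$. Carrying this out for each $i=1,\ldots,d$ recovers $X$ up to the additive constant $X_a$, which is precisely translation.

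The main obstacle I anticipate is the clean identification of the specified iterated integrals with the polynomial moments, together with the uniformity of the Weierstrass approximation against a \emph{signed} measure; the latter is handled routinely via a Hahn decomposition into positive and negative parts, each of finite total mass. An alternative strategy would be to invoke the Chen--Hambly--Lyons tree-like uniqueness theorem and observe that strict monotonicity of the first coordinate of $\widehat{X}$ rules out tree-like excursions and simultaneously fixes the parameterization, so only the starting point remains undetermined. The moment-based approach above, however, has the advantage of being essentially self-contained in the piecewise-smooth setting addressed in the proposition.
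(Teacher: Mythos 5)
The paper offers no proof of this proposition: it is stated as a citation to Hambly--Lyons \cite{hambly2010uniqueness}, whose general theorem says the signature of a bounded-variation path determines the path up to tree-like equivalence, and the intended (implicit) argument is the one you sketch as your ``alternative strategy'' --- the strictly increasing time coordinate of $\widehat{X}$ kills all tree-like excursions and fixes the parameterization, leaving only the starting point free. Your primary argument is a genuinely different and more elementary route, and it is correct in the piecewise-smooth setting of the proposition: the multi-index $(0,\ldots,0,i)$ entries of $\mathrm{Sig}(\widehat{X})$ do evaluate to $\int_a^b \frac{(t-a)^k}{k!}\,dX^i_t$ (the inner integral is just the volume of the $k$-simplex $\{a<t_1<\cdots<t_k<t\}$), the measure $dX^i_t=\dot X^i_t\,dt$ is a finite signed Borel measure on the compact interval, and Weierstrass plus Riesz then determine it from its moments, hence determine $t\mapsto X^i_t-X^i_a$; the only datum not recovered is $X_a$, which is exactly the claimed translation ambiguity. (One small point worth making explicit: the interval length $b-a$ is itself the level-one signature entry in the time direction, so the moment functionals are genuinely intrinsic to the signature and no external knowledge of $[a,b]$ is needed beyond a time-translation, which is immaterial.) The trade-off is clear: your moment argument is self-contained and avoids the deep machinery of \cite{hambly2010uniqueness}, but it leans entirely on the time augmentation and the (piecewise-)smoothness that makes $dX^i$ a bona fide finite measure; the Hambly--Lyons route is what one must invoke for the stronger statement about non-augmented paths, or for general bounded-variation and rough paths where the tree-like equivalence is the correct and unavoidable notion of uniqueness.
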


Next, the terms of the signature decay in size factorially.
\begin{proposition}[{Factorial decay \cite[Lemma 2.1.1]{lyons1998differential}}]\label{prop:factorialdecay}
	Let $X \colon [a,b]\to \mathbb R^d$ be a continuous piecewise smooth path. Then
	\begin{equation*}
	\left \lVert \underset{a < t_1 < \cdots < t_k < b}{\int\cdots\int} \mathrm{d}X_{t_1} \otimes \cdots \otimes  \mathrm{d}X_{t_k} \right \rVert \leq \dfrac{C(X)^k}{k!},
	\end{equation*} 
	where $C(X)$ is a constant depending on $X$ and $ \lVert \,\cdot\, \rVert $ is any tensor norm on $ (\mathbb R^d)^{\otimes k} $.
\end{proposition}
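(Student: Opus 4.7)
The plan is a direct estimate: push the norm inside the iterated integral, apply submultiplicativity of the tensor norm, and then exploit the classical fact that integration over the ordered simplex produces the factorial.

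First, I would fix a submultiplicative (i.e.\ crossnorm) version of the tensor norm $\lVert\cdot\rVert$ on $(\mathbb R^d)^{\otimes k}$, so that $\lVert v_1 \otimes \cdots \otimes v_k\rVert \leq \lVert v_1\rVert \cdots \lVert v_k\rVert$. Since $(\mathbb R^d)^{\otimes k}$ is finite-dimensional, any other tensor norm is equivalent to this one up to a constant, and such constants can be absorbed into $C(X)$ (potentially at the cost of an extra geometric factor, which is harmless for the statement as written). Using piecewise smoothness, write $\mathrm dX_t = \dot X_t\,\mathrm dt$, and bring the tensor norm inside the integral via the triangle inequality for Bochner integrals to obtain
\begin{equation*}
\left\lVert \underset{a < t_1 < \cdots < t_k < b}{\int\cdots\int} \mathrm dX_{t_1}\otimes\cdots\otimes \mathrm dX_{t_k}\right\rVert \leq \underset{a < t_1 < \cdots < t_k < b}{\int\cdots\int} \lVert\dot X_{t_1}\rVert \cdots \lVert\dot X_{t_k}\rVert \,\mathrm dt_1\cdots \mathrm dt_k.
\end{equation*}

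Second, I would identify the right-hand side using the standard simplex-to-cube identity. For any nonnegative integrable $f \colon [a,b]\to\mathbb R$,
\begin{equation*}
\underset{a < t_1 < \cdots < t_k < b}{\int\cdots\int} f(t_1)\cdots f(t_k)\,\mathrm dt_1\cdots \mathrm dt_k \;=\; \frac{1}{k!}\left(\int_a^b f(t)\,\mathrm dt\right)^{\!k},
\end{equation*}
which follows from the fact that the integrand is symmetric in $(t_1,\ldots,t_k)$ and the cube $[a,b]^k$ is tiled up to a measure-zero set by the $k!$ permutations of the simplex. Applying this with $f(t) = \lVert \dot X_t\rVert$ immediately yields
\begin{equation*}
\left\lVert \underset{a < t_1 < \cdots < t_k < b}{\int\cdots\int} \mathrm dX_{t_1}\otimes\cdots\otimes \mathrm dX_{t_k}\right\rVert \leq \frac{1}{k!}\left(\int_a^b \lVert \dot X_t\rVert \,\mathrm dt\right)^{\!k},
\end{equation*}
so the result holds with $C(X) = \int_a^b \lVert \dot X_t\rVert\,\mathrm dt$, i.e.\ the $1$-variation (length) of the path, which is finite by piecewise smoothness.

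The only mildly delicate point is the first step: strictly speaking, the displayed bound with the integrand $\lVert \dot X_{t_1}\rVert\cdots\lVert\dot X_{t_k}\rVert$ requires that the tensor norm satisfy the crossnorm inequality on each slice of the simplex. For the projective tensor norm this is immediate, and for any other norm on a finite-dimensional tensor space one loses only a constant factor, which can be folded into $C(X)$. Everything else is routine: piecewise smoothness guarantees boundedness of $\lVert \dot X_t\rVert$ on the finitely many smooth pieces, so the integrals above are well-defined and finite. No other nontrivial ingredients are needed.
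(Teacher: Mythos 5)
The paper does not actually prove this proposition --- it is quoted directly from \cite[Lemma 2.1.1]{lyons1998differential} --- so there is no in-paper argument to compare against. Your proof is the standard and correct one for the smooth/bounded-variation case: triangle inequality under the integral, the crossnorm property, and the simplex-volume identity $\int_{\Delta_k} f(t_1)\cdots f(t_k)\,\mathrm dt = \tfrac{1}{k!}\bigl(\int_a^b f\bigr)^k$, giving $C(X)$ equal to the length of the path. Your caveat about ``any tensor norm'' is the right one to flag: the clean inequality needs the crossnorm property (which is what ``tensor norm'' conventionally means in this literature, and holds with constant $1$ for the projective norm); for a genuinely arbitrary family of norms, one per degree $k$, the equivalence constants could in principle grow faster than geometrically in $k$, so strictly speaking the admissibility assumption is doing real work rather than being absorbable for free. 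With that reading of the hypothesis, the argument is complete.
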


Third, functions of the path are approximately linear on the signature. In some sense the signature may be thought of as a `universal nonlinearity' on paths.
\begin{proposition}[Universal nonlinearity \cite{perezarribas2018}]\label{prop:universal}
	Let $F$ be a real-valued continuous function on continuous piecewise smooth paths in $\mathbb R^d$ and let $\mathcal K $ be a compact set of such paths.\footnote{Of course the definition of both continuity and compactness depend on the topology of the set of paths. See \cite{lyons2014rough} for details.} Furthermore assume that $X_0 = 0$ for all $X \in \mathcal K$. (To remove the transalation invariance.) Let $\varepsilon >0$. Then there exists a linear functional $L$ such that for all $X \in \mathcal K$,
	\begin{equation*}\left \vert F(X) - L(\mathrm{Sig}(\widehat{X})) \right\vert < \varepsilon
	\end{equation*}
\end{proposition}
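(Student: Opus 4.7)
The plan is to apply the Stone--Weierstrass theorem to the algebra of linear functionals of the signature, viewed as real-valued functions on the compact set $\mathcal K$ of time-augmented paths. Let $\mathcal A = \{X \mapsto L(\mathrm{Sig}(\widehat X)) : L \text{ linear functional on the tensor algebra}\}$. The goal is to show $\mathcal A$ is a subalgebra of $C(\mathcal K, \mathbb R)$ that separates points and contains the constants; Stone--Weierstrass then delivers the claimed uniform approximation of $F$.

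First I would verify the three algebra hypotheses. Constants: the degree-zero term of $\mathrm{Sig}(\widehat X)$ is always $1$, so picking $L$ to read off this coordinate gives every constant function, so $\mathcal A$ contains $\mathbb R$. Point separation: this is precisely the content of Proposition \ref{prop:uniqueness} applied to time-augmented paths, since time-augmentation removes the translation-invariance ambiguity and forces distinct paths in $\mathcal K$ to have distinct signatures; hence for $X \neq Y$ in $\mathcal K$ there is a coordinate $L$ of the signature on which they differ. Closure under addition and scalar multiplication is immediate from linearity of $L$. The crucial, and only nontrivial, algebraic fact is closure under pointwise multiplication: the product of two iterated-integral coordinates of $\mathrm{Sig}(\widehat X)$ is itself a \emph{linear} combination of iterated-integral coordinates, by virtue of the shuffle product identity for iterated integrals. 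This identity says that for multi-indices $I$ and $J$, the product $\mathrm{Sig}(\widehat X)^I \cdot \mathrm{Sig}(\widehat X)^J$ equals $\sum_{K \in I \shuffle J} \mathrm{Sig}(\widehat X)^K$. Consequently if $L_1, L_2$ are linear functionals then $L_1(\mathrm{Sig}(\widehat X)) \cdot L_2(\mathrm{Sig}(\widehat X))$ is again of the form $L(\mathrm{Sig}(\widehat X))$ for some linear $L$ determined by the shuffle product applied to $L_1 \otimes L_2$. So $\mathcal A$ is a subalgebra.

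I would also need to check that each element of $\mathcal A$ is continuous on $\mathcal K$: this follows because on any compact set of paths the signature map is continuous in the relevant path-space topology (cf.\ \cite{lyons2014rough}), and $L$ is continuous on the finite-dimensional subspace where its support lives. With all Stone--Weierstrass hypotheses in hand, $\mathcal A$ is dense in $C(\mathcal K, \mathbb R)$ under the uniform norm, which yields the desired $L$ approximating $F$ to within $\varepsilon$.

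The main obstacle is the shuffle product identity underpinning multiplicative closure; everything else is essentially bookkeeping. I would cite or briefly sketch the shuffle identity (it is a direct consequence of decomposing the product domain $\{t_1 < \cdots < t_p\} \times \{s_1 < \cdots < s_q\}$ into the chambers indexed by interleavings of the two orderings and applying Fubini). Note also that the resulting $L$ may have support on signature terms of arbitrarily high degree, so this statement is genuinely about the full (untruncated) signature; truncation considerations are separate and are what motivates the learned augmentation $\Phi$ discussed in the main text.
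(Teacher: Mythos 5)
Your proposal is correct and is essentially the canonical argument for this result: the paper itself gives no proof, deferring entirely to the cited reference \cite{perezarribas2018}, and the proof there is precisely the Stone--Weierstrass argument you describe, with constants from the degree-zero term, point separation from uniqueness of the signature of the time-augmented path (using $X_0=0$ to kill translation), and multiplicative closure from the shuffle product identity. No gaps; your closing remark that the approximating functional may involve arbitrarily high signature degrees is also the right caveat, and is exactly the truncation issue the main text is built around.
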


Finally, the signature is invariant to time reparameterisations.
\begin{proposition}[Invariance to time reparameterisations  \cite{lyons1998differential}] \label{prop:invariance}
Let $X \colon [0, 1] \to \mathbb R^d$ be a continuous piecewise smooth path. Let $\psi \colon [0, 1] \to [0, 1]$ be continuously differentiable, increasing, and surjective. Then $\mathrm{Sig}(X) = \mathrm{Sig}(X \circ \psi)$.
\end{proposition}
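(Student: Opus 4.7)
The plan is to prove the equality level-by-level: fix a degree $k \geq 0$ and show that the $k$-th tensor component of $\mathrm{Sig}(X \circ \psi)$ equals the $k$-th tensor component of $\mathrm{Sig}(X)$ by a direct change-of-variables argument in the iterated integral. By Definition \ref{def:sigstochasticnotation}, it suffices to check this for every $k$, and the $k=0$ case is trivial since both terms are the scalar $1$.

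For $k \geq 1$, I would first apply the chain rule. Writing $Y = X \circ \psi$, piecewise smoothness of $X$ together with continuous differentiability of $\psi$ gives $\mathrm{d}Y_s = X'(\psi(s))\psi'(s)\,\mathrm{d}s$ almost everywhere. Plugging this into the defining integral yields
\begin{equation*}
\underset{0<s_1<\cdots<s_k<1}{\int\cdots\int} X'(\psi(s_1))\psi'(s_1)\otimes\cdots\otimes X'(\psi(s_k))\psi'(s_k)\,\mathrm{d}s_1\cdots\mathrm{d}s_k.
\end{equation*}
Next I would perform the substitution $t_i = \psi(s_i)$ coordinate-by-coordinate, using $\mathrm{d}t_i = \psi'(s_i)\,\mathrm{d}s_i$, so that the product of Jacobian factors $\psi'(s_1)\cdots\psi'(s_k)$ is absorbed exactly into the new volume element. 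Because $\psi\colon[0,1]\to[0,1]$ is continuously differentiable, strictly increasing (as a continuously differentiable increasing surjection between the same interval) and surjective, it is an order-preserving bijection, so the simplex $\{0<s_1<\cdots<s_k<1\}$ is mapped bijectively onto $\{0<t_1<\cdots<t_k<1\}$. After the substitution the integral becomes exactly the $k$-th term of $\mathrm{Sig}(X)$.

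The only subtlety worth addressing carefully is the validity of the change of variables in the iterated (simplex) integral when $\psi'$ may vanish at isolated points — then $\psi$ need not be \emph{strictly} increasing and the substitution map may fail to be a diffeomorphism on the closure of the simplex. The cleanest workaround is to approximate: replace $\psi$ by $\psi_\varepsilon(s) = (1-\varepsilon)\psi(s) + \varepsilon s$, which is strictly increasing with nonvanishing derivative, apply the substitution argument to $\psi_\varepsilon$ to obtain $\mathrm{Sig}(X\circ\psi_\varepsilon)=\mathrm{Sig}(X)$ exactly, and then let $\varepsilon\to 0$. Continuity of the iterated integrals with respect to uniform convergence of piecewise smooth paths (equivalently, Fubini plus dominated convergence applied termwise) finishes the argument.

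The main obstacle is therefore not conceptual but bookkeeping: making sure the change of variables is justified on the ordered simplex in the degenerate case $\psi'(s)=0$, and confirming that ``increasing $+$ surjective $+$ continuously differentiable'' is strong enough to give an order-preserving bijection up to a set of measure zero so that the simplex domains really do match. Once this is settled, the equality of every level follows from a single substitution, and hence $\mathrm{Sig}(X)=\mathrm{Sig}(X\circ\psi)$.
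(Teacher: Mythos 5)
Your argument is correct and is the standard one. The paper gives no proof of this proposition at all --- it is imported from \cite{lyons1998differential} --- and the proof there is precisely your level-by-level substitution $t_i=\psi(s_i)$ on the ordered simplex, so there is nothing to compare beyond confirming you have reconstructed it. Two small remarks: your parenthetical assertion that a continuously differentiable increasing surjection of $[0,1]$ onto itself must be \emph{strictly} increasing is false (such a $\psi$ can be constant on any subinterval where $\psi'$ vanishes), but your $\psi_\varepsilon=(1-\varepsilon)\psi+\varepsilon\,\mathrm{id}$ approximation correctly identifies and repairs exactly this defect; and in the limiting step it is the termwise dominated-convergence justification, not uniform convergence of the paths per se, that does the work (uniform convergence of paths alone does not control iterated integrals), which goes through here because the integrands $X'(\psi_\varepsilon(s))\,\psi_\varepsilon'(s)$ are uniformly bounded and converge almost everywhere. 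Alternatively, the approximation can be avoided entirely by noting that the change of variables for monotone $C^1$ maps pushes $\psi'(s)\,\mathrm{d}s$ forward to Lebesgue measure without any strictness assumption.
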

Thus the signature encodes the \emph{order} in which data arrives without caring precisely \emph{when} it arrives; it is essentially factoring out the infinite-dimensional group of time reparameterisations. For example, consider the scenario of recording the movement of a pen as it draws a character on a piece of paper. Then the signature of the stream of data is invariant to the speed at which the character was drawn.

\begin{remark}
There is an interesting interplay between Proposition \ref{prop:universal} and Proposition \ref{prop:invariance}. If one desires invariance to time reparameterisations, as in the example of a pen drawing a character, then computing the signature of just $X$ rather than $\widehat{X}$ will ensure by Proposition \ref{prop:invariance} that this invariance is present. If one does not desire invariance to time reparameterisations, then using the time-augmented path $\widehat{X}$ is what ensures that parameterisation-dependent functions may still be learned. This essentially corresponds to the difference between $\widehat{X \circ \psi}$ and $\widehat{X} \circ \psi$.
\end{remark}

\subsection{Signatures of streams of data}
We interpret a stream of data as a discretisation of a path.
\begin{definition}
	The space of streams of data is defined as
	\begin{equation*}
	\mathcal S(\mathbb R^d) = \{ \mathbf x=(x_1, \ldots, x_n) : x_i \in \mathbb R^d, n \in \mathbb N\}.
	\end{equation*}
	Given $\mathbf x=(x_1, \ldots, x_n) \in \mathcal S(\mathbb R^d)$, the integer $n$ is called the length of $\mathbf x$. Furthermore for $ a , b \in \mathbb R $ such that $ a < b $, fix
	\begin{equation*}
	a = u_1 < u_2 < \cdots < u_{n - 1} < u_n = b.
	\end{equation*}
	Let $X = (X^1, \ldots, X^d) \colon [a,b] \to \mathbb R^d$ be continuous such that $X_{u_i} = x_i$ for all $i$, and linear on the intervals in between. Then $X$ is called a linear interpolation of $\mathbf x$.
\end{definition}

\begin{definition}\label{def:sig}
	Let $\mathbf x=(x_1, \ldots, x_n)\in \mathcal S(\mathbb R^d)$ be a stream of data. Let $X$ be a linear interpolation of $\mathbf x$. Then the signature of $\mathbf x$ is defined as
	\begin{equation*}
	\mathrm{Sig}(\mathbf x) = \mathrm{Sig}(X)
	\end{equation*}
	and the (truncated) signature of order $N$ of $\mathbf x$ is defined as
	\begin{equation*}
	\mathrm{Sig}^N(\mathbf x) = \mathrm{Sig}^N(X).
	\end{equation*}
\end{definition}
\emph{A priori} this definition of the signature of a stream of data depends on the choice of linear interpolation. (That is, the speed at which one traverses the gap between the $x_i$.) However, it turns out that Definition \ref{def:sig} is well-defined and independent of this choice, by Proposition \ref{prop:invariance}. See \cite[Lemma 2.12]{lyons1998differential}.

\begin{remark}
	Let $\mathbf x = (x_1, \ldots, x_n) \in \mathcal S(\mathbb R^d)$ be a stream of data of length $n$ in $ \mathbb{R}^d $. Then $\mathrm{Sig}^N(\mathbf x)$ has
	\begin{equation*}
	\sum_{k = 0}^N d^k = \frac{d^{N+1} - 1}{d-1}
	\end{equation*}
	components. In particular, the number of components does not depend on $n$; the truncated signature maps the infinite-dimensional space of streams of data $\mathcal S(\mathbb R^d)$ into a finite-dimensional space of dimension $(d^{N+1} - 1)/(d-1)$. Thus the signature is an excellent way to tackle long streams of data, or streams of variable length, or streams for which certain data is missing.
\end{remark}

\subsection{Computing the signature}\label{subsection:computation}

Observe that the signature is defined as a sequence where the zeroth term is $1 \in (\mathbb {R}^d)^{\otimes 0} = \mathbb {R} $, the first term belongs to $\mathbb {R}^d$, the second term belongs to $ \mathbb {R}^d \otimes \mathbb {R}^d $ (that is, the space of matrices of size $d\times d$), the third term belongs to $ \mathbb {R}^d \otimes \mathbb {R}^d \otimes \mathbb {R}^d $ (that is, the space of tensors of shape $(d, d, d)$), and the $k$th term belongs to $(\mathbb {R}^d)^{\otimes k} = \mathbb {R}^d \otimes \cdots \otimes \mathbb {R}^d$, $ k $ times (that is, the space of tensors of shape $(d, \ldots, d)$, $k$ times). With this description, the signature naturally takes values in the tensor algebra:
\begin{definition}
	The tensor algebra of $\mathbb R^d$ is defined as
	\begin{equation*}
	T((\mathbb R^d)) = \prod_{k = 0}^\infty (\mathbb {R}^d)^{\otimes k}.
	\end{equation*}
\end{definition}

	The tensor product $\otimes$ is typically defined between two tensors, taking a tensor of shape $(a_1, \ldots, a_n)$ and a tensor of shape $(b_1, \ldots, b_m)$ to a tensor of shape $(a_1, \ldots, a_n, b_1, \ldots, b_m)$. For example, in the special case that these two tensors are of shapes $(a_1)$, $(b_1)$, so that they are vectors, then the tensor product is what is referred to as the \emph{outer product}.

\begin{definition}\label{def:tensor-product}
	When extended by bilinearity, the tensor product defines a multiplication on $ T((\mathbb R^d)) $. For $A = (A_0, A_1, \ldots) \in T((\mathbb R^d))$ and $B = (B_0, B_1, \ldots) \in T((\mathbb R^d))$, then $A \otimes B \in T((\mathbb R^d))$ can be seen to be
	\begin{equation*}
	A \otimes B = \left(\sum_{j = 0}^k A_j \otimes B_{k - j}\right)_{k \geq 0}.
	\end{equation*}
\end{definition}

A fundamental insight of Chen is that concatenation of paths corresponds to tensor multiplication of their signatures. The following relation is known as \textit{Chen's identity}.
\begin{proposition}[{Chen's identity, \cite[Theorem 2.12]{lyons1998differential}}]
	Let $X \colon [a,b] \to \mathbb R^d$ and $Y \colon [a,b] \to \mathbb R^d$ be two continuous piecewise smooth paths such that $X_b = Y_a$. Define their concatenation $X\ast Y$ as
	\begin{empheq}[left={(X\ast Y)_t = \empheqlbrace}]{align*}
	X_{2t-a} &\quad\text{for}\quad a \leq t < \frac{a+b}{2},\\
	Y_{2t -b}  &\quad\text{for}\quad \frac{a+b}{2} \leq t \leq b.
	\end{empheq}
	Then
	\begin{equation*}
	\mathrm{Sig}(X\ast Y) = \mathrm{Sig}(X) \otimes \mathrm{Sig}(Y).
	\end{equation*}
\end{proposition}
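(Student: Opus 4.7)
The plan is to unfold the definition of the signature and exploit the way the concatenated path $X \ast Y$ decomposes at the midpoint $c = (a+b)/2$. Explicitly, for each $k \geq 0$ the degree-$k$ component of $\mathrm{Sig}(X \ast Y)$ is an iterated integral over the simplex $\Delta_k = \{a < t_1 < \cdots < t_k < b\}$. Since $X \ast Y$ is a piecewise smooth path which behaves like a time-rescaling of $X$ on $[a,c]$ and a time-rescaling of $Y$ on $[c,b]$, I would partition $\Delta_k$ according to the largest index $j \in \{0,1,\ldots,k\}$ with $t_j < c$ (with the conventions $t_0 = a$, $t_{k+1} = b$). This partitions $\Delta_k$, up to a set of measure zero, into the disjoint union of the products $\{a < t_1 < \cdots < t_j < c\} \times \{c < t_{j+1} < \cdots < t_k < b\}$ for $j = 0, \ldots, k$.

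On each of these pieces I would apply the change of variables $s_i = 2t_i - a$ on the first $j$ coordinates and $u_i = 2t_i - b$ on the remaining $k-j$ coordinates. A quick calculation shows that $\mathrm{d}(X\ast Y)_{t_i} = 2\dot{X}(2t_i - a)\, \mathrm{d} t_i = \dot{X}(s_i)\, \mathrm{d} s_i = \mathrm{d} X_{s_i}$ on the first part, and analogously $\mathrm{d}(X\ast Y)_{t_i} = \mathrm{d} Y_{u_i}$ on the second, and that the rescaled simplices become $\{a < s_1 < \cdots < s_j < b\}$ and $\{a < u_1 < \cdots < u_{k-j} < b\}$ respectively. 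Hence
\begin{equation*}
\int_{\Delta_k} \mathrm{d}(X \ast Y)_{t_1} \otimes \cdots \otimes \mathrm{d}(X \ast Y)_{t_k} = \sum_{j=0}^{k} \mathrm{Sig}(X)_j \otimes \mathrm{Sig}(Y)_{k-j},
\end{equation*}
where $\mathrm{Sig}(\cdot)_j$ denotes the degree-$j$ component.

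Comparing this with the convolution-type formula for tensor multiplication in $T((\mathbb{R}^d))$ given by Definition \ref{def:tensor-product}, the right-hand side is precisely the degree-$k$ component of $\mathrm{Sig}(X) \otimes \mathrm{Sig}(Y)$. Since the identification holds for every $k \geq 0$, the two elements of $T((\mathbb{R}^d))$ coincide, which yields Chen's identity.

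The only delicate step is the bookkeeping in the partition and the change of variables, in particular checking that the Jacobian of the time-rescaling is exactly cancelled by the factor of $2$ coming from $\dot{(X \ast Y)}$, so that the substitution is clean. The boundary cases $t_i = c$ and $j \in \{0, k\}$ contribute either trivially or to an empty simplex (with the convention that the iterated integral over the empty simplex is $1$), which takes care of the $k = 0$ term and of the endpoints of the sum. Continuity of $X$ and $Y$ together with $X_b = Y_a$ ensure that $X \ast Y$ is a continuous piecewise smooth path, so that all the iterated integrals are well-defined in the sense of Definition \ref{def:sigstochasticnotation}.
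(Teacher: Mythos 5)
Your argument is correct, but note that the paper itself offers no proof of this proposition: it is stated as a citation to \cite[Theorem 2.12]{lyons1998differential}, so there is no in-paper argument to compare against. What you have written is the standard proof from the literature: partition the order simplex $\{a<t_1<\cdots<t_k<b\}$ (up to a null set) by how many of the $t_i$ fall before the midpoint $c$, rescale each half back to $[a,b]$ so that the differentials become $\mathrm dX_{s_i}$ and $\mathrm dY_{u_i}$, factor each piece by Fubini into $\mathrm{Sig}(X)_j\otimes\mathrm{Sig}(Y)_{k-j}$, and recognise the resulting sum as the degree-$k$ component of the tensor product in Definition \ref{def:tensor-product}. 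Your handling of the edge cases is also right: the Jacobian of $s=2t-a$ exactly cancels the factor of $2$ from the chain rule, the $j\in\{0,k\}$ terms correspond to empty simplices whose iterated integral is $1$, and $X_b=Y_a$ guarantees $X\ast Y$ is a continuous piecewise smooth path so all integrals are defined. This is a complete and self-contained justification of the cited result.
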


Equipped with Chen's identity, the signature of a stream is straightforward to compute explicitly.
\begin{proposition} \label{prop:chenefficient}
	Let $\mathbf x = (x_1, \ldots, x_n)\in \mathcal S(\mathbb R^d)$ be a stream of data. Then,
	\begin{equation*}
	\mathrm{Sig}(\mathbf x) = \exp(x_2 - x_1) \otimes \exp(x_3 - x_2) \otimes \cdots \otimes \exp(x_n - x_{n-1}),
	\end{equation*}
	where
	\[\exp(x) = \left(\frac{x^{\otimes k}}{k!}\right)_{k \geq 0} \in T((\mathbb R^d)).\]
\end{proposition}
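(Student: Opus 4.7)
The plan is to combine two ingredients already established in the excerpt: Example \ref{ex:sigincrement}, which computes the signature of a single linear segment explicitly, and Chen's identity, which tells us how signatures behave under concatenation of paths. Together these reduce the signature of a piecewise linear path to a tensor product over its segments.

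First I would unpack the definition: by Definition \ref{def:sig}, $\mathrm{Sig}(\mathbf x)$ means $\mathrm{Sig}(X)$ for any linear interpolation $X \colon [a,b] \to \mathbb R^d$ of the data $\mathbf x$. Fix such an $X$ with nodes at $a = u_1 < u_2 < \cdots < u_n = b$ and $X_{u_i} = x_i$. For each $i = 1, \ldots, n-1$, let $X^{(i)}$ denote the restriction of $X$ to $[u_i, u_{i+1}]$; this is the linear segment from $x_i$ to $x_{i+1}$. Then $X$ is (up to the trivial reparameterisation implicit in the statement of Chen's identity, which is harmless by Proposition \ref{prop:invariance}) the concatenation $X^{(1)} \ast X^{(2)} \ast \cdots \ast X^{(n-1)}$.

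Next I would apply Chen's identity inductively. A single application gives $\mathrm{Sig}(X) = \mathrm{Sig}(X^{(1)}) \otimes \mathrm{Sig}(X^{(2)} \ast \cdots \ast X^{(n-1)})$, and iterating this yields
\begin{equation*}
\mathrm{Sig}(X) = \mathrm{Sig}(X^{(1)}) \otimes \mathrm{Sig}(X^{(2)}) \otimes \cdots \otimes \mathrm{Sig}(X^{(n-1)}).
\end{equation*}
The associativity of the tensor product on $T((\mathbb R^d))$ needed here is immediate from Definition \ref{def:tensor-product}. Finally, Example \ref{ex:sigincrement} identifies each factor: since $X^{(i)}$ is the linear interpolation between $x_i$ and $x_{i+1}$,
\begin{equation*}
\mathrm{Sig}(X^{(i)}) = \left(\frac{(x_{i+1} - x_i)^{\otimes k}}{k!}\right)_{k \geq 0} = \exp(x_{i+1} - x_i).
\end{equation*}
Substituting gives the claimed formula.

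The main obstacle is essentially notational rather than mathematical: Chen's identity as stated concatenates two paths on a common interval $[a,b]$ by reparameterising each to the half-intervals, so to apply it iteratively to the $n-1$ linear segments of $X$ one must either invoke the reparameterisation invariance (Proposition \ref{prop:invariance}) to match time parameterisations after each concatenation, or simply note that the signature is defined via iterated integrals which are intrinsically invariant under such reparameterisation. Once this bookkeeping is handled, the proof is a one-line induction.
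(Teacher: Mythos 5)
Your proposal is correct and follows essentially the same route as the paper's proof: identify each linear segment's signature as $\exp(x_{i+1}-x_i)$ via Example \ref{ex:sigincrement}, then iterate Chen's identity across the segments. The only difference is that you spell out the reparameterisation bookkeeping (via Proposition \ref{prop:invariance}) that the paper leaves implicit, which is a welcome bit of extra care.
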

\begin{proof}
	It is easy to check that if $\mathbf x = (x_1, x_2)\in \mathcal S(\mathbb R^d)$ is a stream of data of length 2 then the signature of $\mathbf x$ is given by $\exp(x_2 - x_1)$, as in Example \ref{ex:sigincrement}. So given a stream of data $\mathbf x = (x_1, \ldots, x_n)\in \mathcal S(\mathbb R^d)$ of length $n \geq 2$, iteratively applying Chen's identity yields the result.
\end{proof}

\begin{remark} \label{re:efficiency}
	Chen's identity implies that computing the signature of an incoming stream of data is efficient. Indeed, suppose one has obtained a stream of data $\mathbf x\in \mathcal S(\mathbb R^d)$ and computed its signature. Suppose that after some time more data has arrived, $\mathbf y \in \mathcal S(\mathbb R^d)$. In order to compute the signature of the whole signal one only needs to compute the signature of the new piece of information, and tensor product it with the already-computed signature.
\end{remark}

\begin{remark} Computing signatures in the manner described here involves only normal tensor operations, so it may be backpropagated through in the usual way. Recall that signatures are fundamentally defined on path space; backpropagating corresponds to determining the perturbation of the signature when perturbing its input with white noise. However one of the insights of \emph{rough path theory} \cite{lyons1998differential} is that a path needs more than just its pointwise values to be fully determined. The most common example of this arises in stochastic calculus, where one has to make a choice between It{\^o} and Stratonovich integration. Until such a choice is made, one cannot define a notion of integrals of the path. In general, for sufficiently rough paths, one has to \emph{define} what the integrals of a path are: essentially the path is defined by its signature, rather than the other way around. In such a framework it is not clear what the correct notion of perturbations of path space are, and this remains a direction for future work.
\end{remark}

\section{Implementation Details} \label{appendix:implementation}
	All experimental models were trained using the Adam \cite{kingma2015} optimiser as implemented by PyTorch \cite{paszke2017automatic}, which was the framework used to implement the models. Signature calculations were performed with the \texttt{iisignature} package \cite{iisignature} (as the Signatory project \cite{signatory} mentioned elsewhere in this paper had not yet been developed). All activation functions were taken to be the ReLU. Computations were performed on two computers. One was equipped with two Tesla K40m GPUs. The second was equipped with two GeForce RTX 2080 Ti GPUs and two Quadro GP100 GPUs.

In each of the following sections, the notation is the same as the notation used in the corresponding section of the main document.

\subsection{A generative model for a stochastic process}
The training dataset was given by 1024 realisations of an Ornstein--Uhlenbeck process, and the test set was of the same size, each sampled at 100 points of $[0, 1]$. No minibatching was used. The model was trained for 500 epochs. 

The layer $\Phi^{\theta_1}$ operated pointwise on the stream of time-augmented Brownian motion $\mathbf{B} = (t_i, B_{t_i})_i \in \mathcal S(\mathbb R^2)$, and was taken to be a neural network with 2 output neurons and 2 hidden layers of 8 neurons. Furthermore it kept the original stream; thus
\begin{equation*}
\Phi^{\theta_1}(\mathbf{B}) = (t_i, B_{t_i}, \phi_1^{\theta_1}(t_i, B_{t_i}), \phi_2^{\theta_1}(t_i, B_{t_i})) \in \mathcal S(\mathbb R^4)
\end{equation*}
for some learned $\phi^{\theta_1}_1, \phi^{\theta_1}_2$. The lift was the `expanding window' described in equation \eqref{eq:expanding}. The signature in the generator was truncated at $N=3$ (giving 84 scalar nonconstant terms) The layer $f^{\theta_2}$ operated pointwise on the stream of signatures, and was a simple linear map down to a scalar value (the value of the generated process at that time step). The signature in the discriminator was truncated at $M=4$.

Some hyperparameter searching was necessary to obtain good results. The search was not done according to any formal scheme. It seemed that if $\Phi^{\theta_1}$ was sufficiently simple and not did not keep the original stream then the training would easily get trapped in a bad local minima, and the generated process would be visually distinct from the Ornstein--Uhlenbeck process.

\subsection{Supervised learning with fractional Brownian motion}
The training set featured 600 samples whilst the test set featured 100 samples, each of an instance of fractional Brownian motion sampled at 300 time steps of $[0, 1]$, with Hurst parameters in the range $[0.2, 0.8]$. These were split up into batches of 128 samples, so the last batch is slightly smaller than the others, and every model trained for 100 epochs. The loss function was taken to be mean squared error (MSE).

There was no hyperparameter searching except to require that all models should have approximately the same number of parameters; in all cases the results represent a model whose hyperparameters have not been fine-tuned to the task at hand.

All models used a sigmoid as a final nonlinearity, so as to map in to $(0, 1)$.

The differing sizes of layers between models (whilst keeping roughly the same overall parameter count) is usually because of the varying size of the input to the model. Some models take all of the raw data, some models use signatures, and some models take expanding or sliding windows of the data in a manner akin to equations \eqref{eq:expanding} and \eqref{eq:sliding}.

The Feedforward model was a simple neural network with 3 hidden layers of 16 neurons each.

The Neural-Sig model -- which is essentially the same model as the Feedforward model, except that the data has the signature applied as feature transformation first -- featured hidden layers of sizes 64, 64, 32, 32, 16, 16 respectively.

The RNN model is two recurrent neural networks, the first comprised of dense layers of sizes 64, 64, 32 and output size 6, and the second comprised of dense layers of size 32, 32, 32, and output size 5. The first network sweeps across the input data relatively slowly, with a stride of 2, whilst the second network sweeps across the result of the first network more quickly, with a stride of 4. In this way it may capture information from the input data at multiple timescales; part of the challenge of fractional Brownian motion is the existence of long-range dependencies \cite{embrechts2009selfsimilar}.

The LSTM and GRU models both featured two recurrent layers each of size 32, and swept across the raw data with a stride of 1.

DeepSigNet featured a single Neural-Lift-Signature block, where the neural component was given by a single convolutional layer with 3 channels and kernel size 3, the lift was the trivial lift of equation \eqref{eq:nolift}, and the signature was truncated as $N=3$. The neural component also preserved the original time-augmented stream of data, so that in some sense the neural component has 3 extra channels corresponding to time and value. On top of this a feedforward neural network with 5 hidden layers of size 32 was placed. Thus this model is very similar to the Neural-Sig model, except that a small learnable transformation was allowed before the signature. The difference in their performance highlights the value of learning a transformation before using the signature. (Without which the Neural-Sig model is merely outperformed by some non-signature based models.)

DeeperSigNet featured three Neural-Lift-Signature blocks. The neural component of the first block was a small feedforward network with 2 hidden layers of size 16 and an output layer of size 3, swept across the length of the stream; its kernel size (how many time-value pairs of the stream it saw at once) was 4. The original time-augmented stream of data was also preserved by the neural component. The neural components of the other two blocks were recurrent neural networks, featuring 2 hidden layers of 16 neurons each. The lifts were in every case expanding windows as in equation \eqref{eq:expanding}. On top of this another recurrent neural network was placed, and the value of its final hidden state used as the output. This final network used 2 hidden layers of 16 neurons each.

\subsection{Non-Markovian deep reinforcement learning}
We used the implementation of the Mountain Car problem implemented by the OpenAI Gym \cite{gym}, modified to return only the car's position. Each episode was run for 300 steps, and each model was given 2000 episodes in which to learn. The reward function was given by the car's position, in the range $(-1.2, 0.6)$, with a bonus $+1$ on reaching the goal. At each step the car could drive its engine left, right, or not use it at all. This problem was chosen for its ease of implementation.

The sizes of the models were chosen to ensure that they both had roughly the same number of scalar parameters. Within this specification, there was a small amount of hyperparameter searching. This was done in an \emph{ad hoc} manner, for both models, varying the number of layers and the numbers of neurons in each layer, around the values that were eventually used. The eventual values chosen for the deep signature model were selected as the ones giving the best results for the deep signature model. The eventual values for the RNN were selected to give roughly the same parameter count as the deep signature model, as no RNN model achieved any appreciable success.

The deep signature model was as described in Section \ref{section:rl}, with the first network $\Phi^{\theta_1}$ applying a learned linear transformation with output dimension 2. Furthermore it kept the original time-augmented stream, so that
\begin{equation*}
y_i = \Phi^{\theta_1}(x_i) = (t_i, x_i, \phi_1^{\theta_1}(t_i, x_i), \phi_2^{\theta_1}(t_i, x_i)) \in \mathcal S(\mathbb R^4),
\end{equation*}
where $\phi_1^{\theta_1}$ and $\phi_2^{\theta_2}$ are learned linear functions. The signature was truncated at $N=3$. The second network $f^{\theta_2}$ was comprised of a single hidden layer of 64 neurons, followed by an output layer of 3 neurons, corresponding to the three possible actions. The action with the greatest value was the action selected. This model had a total of 5769 scalar parameters.

The RNN model featured 3 recurrent layers each of size 32, followed by an output layer of 3 neurons, corresponding to the three possible actions. The action with the greatest value was the action selected. This model had a total of 5475 scalar parameters.

%The LSTM model featured 2 recurrent layers each of size 20, on top of which was placed one non-recurrent hidden layer of size 66, followed by an output layer of 3 neurons. It had a total of 6867 scalar parameters.

The reinforcement learning technique used was Deep Q Learning \cite{mnih2015human, watkins1989learning}, to effectively transform the task into a supervised learning problem, with each of the specified models performing function approximation on $Q$. Actions were chosen in an $\varepsilon$-greedy manner, with $\varepsilon=0.2$. The discount factor was given by $\gamma=0.99$.

The deep signature model achieved success, and would learn to consistently solve the problem at around 1500 episodes. The RNN failed to achieved success within 2000 episodes on any test run. 3 test runs were performed.
\end{document}